\newtheorem{theorem}{Theorem}
\newtheorem{corollary}[theorem]{Corollary}
\newtheorem{proposition}[theorem]{Proposition}
\def\eqref#1{equation~\ref{#1}}
\def\1{\bm{1}}
\DeclareMathAlphabet{\mathsfit}{\encodingdefault}{\sfdefault}{m}{sl}
\SetMathAlphabet{\mathsfit}{bold}{\encodingdefault}{\sfdefault}{bx}{n}
\newcommand{\Var}{\mathrm{Var}}
\definecolor{promptbg}{RGB}{245,245,245}
\definecolor{promptframe}{RGB}{200,200,200}
\lstdefinestyle{prompt}{
    backgroundcolor=\color{promptbg},
    frame=single,
    rulecolor=\color{promptframe},
    basicstyle=\ttfamily\footnotesize,
    breaklines=true,
    showstringspaces=false,
    columns=fullflexible
}
\title{Towards Scalable Oversight\\via Partitioned Human Supervision}
\author{Ren Yin\textsuperscript{1,2} \quad
Takashi Ishida\textsuperscript{2,1} \quad
Masashi Sugiyama\textsuperscript{2,1} \\
\textsuperscript{1}The University of Tokyo
\quad \textsuperscript{2}RIKEN
}
\begin{document}

\maketitle

\begin{abstract}
As artificial intelligence (AI) systems approach and surpass expert human performance across a broad range of tasks, obtaining high-quality human supervision for evaluation and training becomes increasingly challenging.
Our focus is on tasks that require deep knowledge and skills of multiple domains, where this bottleneck is severe.
Unfortunately, even the best human experts are knowledgeable only in a single narrow area, and will not be able to evaluate the correctness of advanced AI systems on such superhuman tasks.
However, based on their narrow expertise, humans may provide a weak signal, i.e., a \emph{complementary label} indicating an option that is incorrect. For example, a cardiologist could state that ``this is not related to any cardiovascular disease,'' even if they cannot identify the true disease.
Based on this weak signal, we propose a scalable oversight framework that enables us to evaluate frontier AI systems without the need to prepare the ground truth.
We derive an \emph{unbiased} estimator of top-1 accuracy from complementary labels and quantify how many complementary labels are needed to match the variance of ordinary labels. We further introduce two estimators to combine scarce ordinary labels with abundant complementary labels.
We provide finite-sample deviation guarantees for both complementary-only and the mixed estimators.
Empirically, we show that we can evaluate the output of large language models without the ground truth, if we have complementary labels.
We further show that we can train an AI system with such weak signals: we show how we can design an agentic AI system automatically that can improve itself with this partitioned human supervision. Our code is available at \href{https://github.com/R-Yin-217/Towards-Scalable-Oversight-via-Partitioned-Human-Supervision}{\faGithub\ Towards Scalable Oversight via Partitioned Human Supervision}.
\end{abstract}

\section{Introduction}
As foundation models and artificial intelligence (AI) systems \citep{openai_gpt5_2025,anthropic_claude4_2025,deepseek_r1_2025,google_gemini25_2025} approach and in some areas surpass expert human performance, supervision itself becomes a key bottleneck. Current alignment pipelines such as supervised fine-tuning (SFT), reinforcement learning from human feedback (RLHF; \citet{ziegler2019fine,stiennon2020learning,ouyang2022training}), or reinforcement learning from verifiable rewards (RLVR; \citet{shao2024deepseekmath, yu2025dapo, lambert2025tulu}) presuppose that humans can reliably provide supervision or design verifiers for training and evaluation.
Yet for the superalignment regime~\citep{bowman2022measuring}, future models will tackle problems whose solutions are too technical or too cross-disciplinary for any single human to verify comprehensively. When we cannot produce ground truth or prepare automated verifiers, how should we \emph{evaluate} and \emph{train} such systems? 

We begin from an observation about the expertise in high-skill domains in the human society \citep{wuchty2007increasing}.
As tasks grow in difficulty, human experts specialize ever more narrowly: cardiology vs.\ oncology in medicine, sector-specific analysts in finance, or sub-subfields in science. Specialists can often make \emph{local} determinations with high precision, e.g., a cardiologist might say, ``this is \emph{not} a cardiovascular disease,'' while an oncologist might point out, ``this pattern is never seen in the field of oncology.'' While such judgments may fail to positively certify correctness, they readily certify that certain options are wrong. In other words, even when positive ground truth is out of reach, human experts can often provide reliable \emph{complementary labels} \citep{ishida2017neurips, yu2018learning, wang2024icml, wang2025climage}: labels indicating one or more options that are definitely the incorrect.

This paper develops a scalable oversight \citep{amodei2016concrete} framework that treats such \emph{partitioned human supervision} as a first-class supervision primitive. Our starting point is a multi-choice evaluation setting where ordinary (true) labels may be scarce or unavailable, but complementary labels arise naturally from partitioned expertise and are abundant.
We show that one can utilize these complementary labels to propose an unbiased estimator of top-1 accuracy for an AI system under test, and combine them with scarce ordinary labels for refinement of the estimator.
The estimators allow us to (i) \emph{evaluate} frontier models without preparing any ground truth, and (ii) use these evaluations as learning signals to \emph{design} agentic systems on tasks beyond expert human abilities.

\begin{figure}[t]
    \centering
    \includegraphics[width=\linewidth]{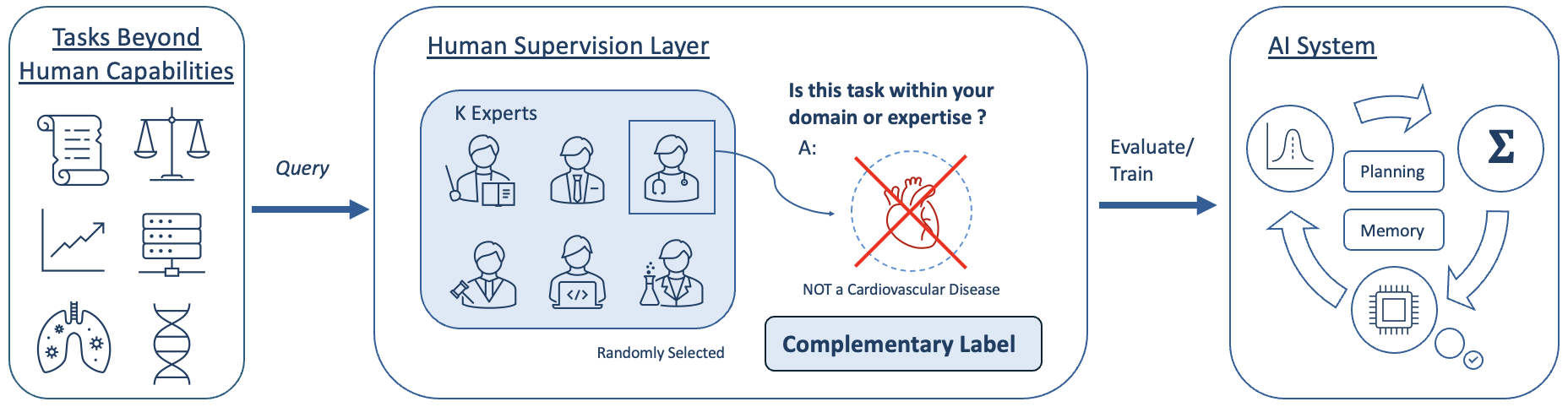}
    \caption{Superhuman tasks are routed to a human expert who can only give a weak signal that they are not capable of solving the task. We show that this weak signal, i.e., \emph{complementary label}, can be used to evaluate/train an AI system.}
    \label{fig:overview}
    \vspace{-3mm}
\end{figure}

Our work complements many other active lines of research on scalable oversight.
For example, \emph{weak-to-strong generalization}~\citep{burns2023weak} studies whether strong models trained on weak (noisy) supervisors can exceed those supervisors, possibly by eliciting knowledge that the strong model already possesses.
\emph{Easy-to-hard generalization}~\citep{sun2024easy} trains evaluators on easier problems and applies them to supervise harder ones, by leveraging the assumption that evaluation is easier than generation.
\emph{Debate}~\citep{khan2024debating} elicits truth via adversarial arguments that a judge selects among.
\emph{Reinforcement learning from AI feedback} or \emph{Constitutional AI}~\citep{bai2022constitutional} replaces human preferences with AI‑generated feedback.
\emph{Recursive task decomposition}~\citep{wu2021recursively} breaks the problem into leaf subtasks that humans can directly label or compare, for tasks that are easily decomposable.
We address a new, orthogonal axis: when human experts cannot supply correct answers for hard, cross-domain instances, can we use their \emph{partitioned} ``this option is wrong'' judgments to obtain performance estimates and useful training signals?

We formalize a labeling protocol in which each instance is routed to a randomly chosen domain specialist responsible for exactly one option (class). The specialist either confirms that option (yielding an ordinary label) or confidently rejects it (yielding a complementary label). Under this uniform wrong-index design, we derive an \emph{unbiased} estimator of accuracy from complementary labels alone via a simple linear correction, and characterize its variance. We then introduce two \emph{mixture estimators} that combine ordinary and complementary labels: (i) an inverse-variance–weighted (IVW) estimator with a practical plug-in form, and (ii) a closed-form maximum-likelihood estimator (ML).

We empirically validate the framework in three settings. First, we perform \emph{statistical validation} on many popular large language model (LLM) benchmarks, which confirms that we do not need the ground truth to evaluate the performance of AI systems experimentally, such as LLMs.
We focus on \emph{real-world evaluation} on tasks that cannot be solved by a single expert human with only narrow expertise.
As a proof-of-concept, we use finance and medical benchmarks to demonstrate that partitioned feedback from sector analysts or highly-specialized doctors enable accurate model evaluation without the ability to solve the task alone.
Finally, we show that this weak signal can be used as a training signal for an AI system.
More specifically, we use complementary labels for \emph{agentic training}: we replace ordinary accuracy with our estimator as the fitness signal inside agent search pipelines, and show that this improves downstream performance, demonstrating a pathway to training when only complementary feedback is available.

Our main contributions are as follows:
1) We introduce scalable oversight via partitioned human supervision, a protocol that exploits real-world specialization to collect \emph{complementary labels} at scale for superhuman tasks.
2) We derive an unbiased complementary-label estimator for top-1 accuracy, analyze its variance, and show how many complementary labels are needed to match the variance of the estimator obtained from ordinary labels.
We further propose two mixture estimators together with finite-sample bounds and practical plug-in implementations.
3) We demonstrate empirically that these estimators enable both evaluation and training without the full ground truth.

\section{Methodology}
In this section, we explain our problem setup and notations and explain our proposed method.
\paragraph{Setup and Notations}

We consider a multiple-choice evaluation task with $K$ options per item (e.g., $K=4$ for choices A, B, C, and D).
Formally, each item is represented by an input $x \in \mathcal{X}$, where $\mathcal{X}$ denotes the input domain (e.g., the space of question texts).
Each item  has a correct label $y \in \{1,\ldots,K\}$, representing the index of the correct option after any shuffling. We call this the ordinary label. Let $(X,Y)$ denote a random pair drawn from an identical joint distribution $\mathcal{D}$ over $\mathcal{X} \times \{1,\ldots,K\}$.

Equivalently, we can regard to first sample item $X$, and then a human expert provides label $Y$ based on the underlying conditional distribution $p(Y|X)$.
An AI system (e.g., a machine learning classifier, a single LLM, or an LLM agent/workflow) produces a prediction $\hat Y = f(X) \in \{1,\ldots,K\}$.  
For an item with an ordinary label $y$, define the indicator
$Z = \mathbb{I}\{\hat Y = Y\} \in \{0,1\}$.
The (population) top-1 accuracy is then
$A = \Pr(Z=1) = \Pr(f(X)=Y)$, where $\Pr(\cdot)$ denotes probability with respect to the joint distribution $\mathcal{D}$.

We assume the complementary label is drawn uniformly at random among the non-$y$ classes \citep{ishida2017neurips}:
\begin{equation}
\label{eq:uniform}
p(\bar Y = k \mid Y) = \frac{1}{K-1}, \quad \forall k \neq Y.
\end{equation}
That is, for each item, the revealed wrong index is sampled uniformly from the $K-1$ non-$Y$ classes.
For an item with a complementary label $\bar Y \neq Y$, define
$
W = \mathbb{I}\{\hat Y \neq \bar Y\} \in \{0,1\}$.
The assumption in Eq.~\ref{eq:uniform} is enforced by the following data collection protocol.
We assume that we have a set of questions $\{x_i\}^N_{i=1}$ and $K$ annotators, where each annotator is a human expert in the field of one of the classes. For each question $x_i$, choose an annotator randomly (let's say an annotator in charge of the $k$-th class is chosen), and ask ``Is the answer of question $x_i$ class $k$?''.
If the answer is yes, we save this in the ordinary-label set as an item with (ordinary) class $k$ label.
If the answer is no, we save this in the complementary-label set as an item with (complementary) class $k$ label. Under this protocol, the uniform sampling assumption in Eq.~\ref{eq:uniform} holds by construction (Appendix~\ref{alg:collection}). Since each expert is queried with equal probability and only performs a binary compatibility check for their designated class, the potential for systematic annotation bias is substantially reduced.

Finally, we will have two disjoint evaluation sets, drawn from the same distribution and evaluated by the same system:
1) An \emph{ordinary-label set} of size $n_{\mathrm{o}}$, yielding realizations $z_1,\ldots,z_{n_{\mathrm{o}}}$ and total correct count $S_\mathrm{o} = \sum_{j=1}^{n_{\mathrm{o}}} z_j$.
2) \emph{complementary-label set} of size $n_\mathrm{c}$, yielding realizations $w_1,\ldots,w_{n_\mathrm{c}}$ and total ``weakly-correct''\footnote{Here ``weakly-correct'' means that the model prediction is \emph{not} equal to the complementary label. 
For example, when $K=4$, avoiding the complementary label still leaves a $\tfrac{2}{3}$ chance of being wrong.} 
count $S_\mathrm{c} = \sum_{i=1}^{n_{\mathrm{c}}} w_i$. 
The total number of samples is $N = n_{\mathrm{o}} + n_{\mathrm{c}}$.

\subsection{Estimator from Complementary Labels}
\label{sec:estimator-from-complementary-labels}
It is straightforward to estimate the system accuracy $A$ by using ordinary labels as
$
\widehat{A}_{\mathrm{ord}} = \frac{S_\mathrm{o}}{n_{\mathrm{o}}}$.
If we want to use complementary labels to estimate $A$, a naive approach would be to count the proportion of ``avoiding the complementary label,'' i.e., $\hat q=\tfrac{1}{n_{\mathrm{c}}}\sum_{i=1}^{n_{\mathrm{c}}} w_i$. However, $\hat q$ is not an unbiased estimator of the accuracy.
Thus, we propose the following complementary-label estimator:
\begin{corollary}
\label{thm:comp-estimator}
Under the assumption in Eq.~(\ref{eq:uniform}), the estimator
\begin{equation}
\label{eq:linear-transformation}
\widehat{A}_{\mathrm{comp}} = (K-1)\hat q - (K-2)
\end{equation}
is unbiased for $A$, where $K \geq 3$ is the number of choices.
\end{corollary}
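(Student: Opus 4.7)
The plan is to show that $\hat q$ has mean equal to a specific affine function of $A$ that can be inverted by the linear transformation in Eq.~(\ref{eq:linear-transformation}). Since $\widehat A_{\mathrm{comp}}$ is an affine function of a sample mean of i.i.d.\ indicators $W_i$, by linearity of expectation it suffices to establish $\mathbb{E}[W] = \tfrac{A + K - 2}{K-1}$; everything else is algebra.

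First I would compute $\mathbb{E}[W] = \Pr(\hat y \neq \bar y)$ by the law of total probability, conditioning on whether the system's prediction is correct. On the event $\{\hat y = y\}$, which has probability $A$, the complementary label $\bar y$ is drawn from the non-$y$ classes and hence automatically differs from $\hat y$, contributing probability $1$. On the event $\{\hat y \neq y\}$, which has probability $1-A$, the predicted class $\hat y$ is itself one of the $K-1$ non-$y$ classes; by the uniform assumption in Eq.~(\ref{eq:uniform}), the conditional probability that $\bar y$ coincides with $\hat y$ is exactly $\tfrac{1}{K-1}$, so $\Pr(\hat y \neq \bar y \mid \hat y \neq y) = \tfrac{K-2}{K-1}$.

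Combining the two cases gives
\begin{equation*}
\mathbb{E}[W] \;=\; A \cdot 1 + (1-A)\cdot \tfrac{K-2}{K-1} \;=\; \tfrac{A+K-2}{K-1},
\end{equation*}
which I would then invert to $A = (K-1)\,\mathbb{E}[W] - (K-2)$. Applying linearity of expectation to $\widehat A_{\mathrm{comp}} = (K-1)\hat q - (K-2)$ with $\mathbb{E}[\hat q] = \mathbb{E}[W]$ immediately yields $\mathbb{E}[\widehat A_{\mathrm{comp}}] = A$, establishing unbiasedness.

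The only place requiring a little care, and the main (mild) obstacle, is justifying the conditional probability $\Pr(\bar y = \hat y \mid \hat y \neq y) = \tfrac{1}{K-1}$: this uses that $\bar y$ is, by the protocol, uniform over the $K-1$ non-$y$ classes \emph{conditionally on $(x,y)$}, and that $\hat y = f(x)$ is a deterministic function of $x$, so conditioning further on $\{\hat y \neq y\}$ does not alter this uniform distribution on the $K-1$ classes, among which $\hat y$ is a single fixed element. Once this is spelled out, the remaining derivation is routine, and the condition $K \geq 3$ is needed only so that $\tfrac{K-2}{K-1}$ is well-defined and the estimator is non-degenerate.
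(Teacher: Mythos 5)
Your proposal is correct and follows essentially the same route as the paper's proof: condition on whether $\hat y = y$, use the uniform wrong-index assumption to get $\mathbb{E}[W]=\tfrac{A+K-2}{K-1}$, and invert the affine map by linearity of expectation. Your extra remark carefully justifying $\Pr(\bar y=\hat y\mid \hat y\neq y)=\tfrac{1}{K-1}$ via the determinism of $\hat y=f(x)$ is a nice elaboration of a step the paper leaves implicit, but it does not change the argument.
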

We show the proof in Appendix~\ref{appendix:proof-comp-estimator}.
This linear correction is precisely the $0$–$1$ loss specialization of the 
risk-rewrite identity of Eqs.~(8) and (10) in \cite{pmlr-v97-ishida19a}; 
replacing expectations by sample means yields our unbiased estimator.
A full derivation is provided in Appendix~\ref{appendix:loss-connection}.

Even if we have an unbiased estimator with only complementary labels, the variance is expected to become larger compared with the case when we use the same number of ordinary labels. This is because complementary labels are ``weaker'' than ordinary labels in the sense that $K-1$ different complementary labels is equivalent to 1 ordinary label.
We first confirm this as follows.
Since $W$ is Bernoulli with mean $q=\tfrac{A+K-2}{K-1}$, we obtain
\begin{align}
\label{eq:var-comp}
\mathrm{Var}(\widehat{A}_{\mathrm{comp}})
= (K-1)^2 \,\mathrm{Var}\!\left(\tfrac{1}{n_{\mathrm{c}}}\sum_{i=1}^{n_{\mathrm{c}}}W_i\right)= \frac{(K-1)^2}{n_{\mathrm{c}}}\, q(1-q)
= \frac{(A+K-2)(1-A)}{n_{\mathrm{c}}}.
\end{align}
This expression depends on the unknown $A$. In practice, a plug-in estimator is used.
For the ordinary-label dataset, the number of correct predictions satisfies
$S_\mathrm{o} \sim \mathrm{Bin}(n_{\mathrm{o}}, A)$, where $\mathrm{Bin}(n,p)$ denotes the binomial distribution with 
parameters $n$ (number of independent trials) and $p$ (success probability in each trial).
Thus,
$\mathbb{E}\!\left[{S_\mathrm{o}}/{n_{\mathrm{o}}}\right] = A$, and
$\mathrm{Var}\!\left({S_\mathrm{o}}/{n_{\mathrm{o}}}\right) = {A(1-A)}/{n_{\mathrm{o}}}$, where $\mathbb{E}[\cdot]$ and $\mathrm{Var}(\cdot)$ denote expectation and variance, respectively, 
taken with respect to the underlying population distribution $\mathcal{D}$ over $(x,y)$.
Comparing this with Eq.~(\ref{eq:var-comp}), the variance ratio is
\begin{equation}
\label{eq:var-ratio}
\frac{\mathrm{Var}(\widehat{A}_{\mathrm{comp}})}{\mathrm{Var}(S_\mathrm{o}/n_{\mathrm{o}})}
= \frac{(A + K - 2)\, n_{\mathrm{o}}}{A\, n_{\mathrm{c}}}.
\end{equation}
Hence, when $n_{\mathrm{c}}=n_{\mathrm{o}}$, the complementary estimator has larger variance for all $K \geq 3$, as expected.
Next, it would be interesting to derive how much more complementary labels we will need to prepare to have the same amount of variance as the ordinary label case.
To match the variance of the ordinary estimator, the size of the complementary dataset must be 
\begin{equation}
\label{eq:sample-ration}
n_{\mathrm{c}} = \left(1 + \frac{K - 2}{A}\right) n_{\mathrm{o}}.
\end{equation}
That is, the larger $A$ is, the fewer additional complementary labels are needed.  
Since $A$ is unknown, one can use a small ordinary-labeled set to bound $A$ and infer the required $n_{\mathrm{c}}$, motivating the combined estimator in \S~\ref{sec:combining-ordinary-and-complementary-labels}.

\subsection{Combining Ordinary and Complementary Labels}
\label{sec:combining-ordinary-and-complementary-labels}

We now combine information from both datasets. Recall
$\widehat{A}_{\mathrm{ord}} = \tfrac{S_\mathrm{o}}{n_{\mathrm{o}}}$ and 
$\widehat{A}_{\mathrm{comp}} = (K-1)\tfrac{S_\mathrm{c}}{n_{\mathrm{c}}} - (K-2)$ are both unbiased under the independent and identically distributed (i.i.d.)\ sampling assumption.

\paragraph{Inverse Variance Weighted Estimator (IVW)}

Consider the linear combination
$
\widehat{A}_{\mathrm{mix}} = w\,\widehat{A}_{\mathrm{ord}} + (1-w)\,\widehat{A}_{\mathrm{comp}}$ where $w \in [0,1]$.
Because the two components are independent and unbiased (disjoint samples), $\widehat{A}_{\mathrm{mix}}$ is unbiased with variance
\[
\mathrm{Var}(\widehat{A}_{\mathrm{mix}}) 
= w^2 \tfrac{A(1-A)}{n_{\mathrm{o}}} 
+ (1-w)^2 \tfrac{(K-1)^2}{n_{\mathrm{c}}}\, q(1-q),
\quad q=\tfrac{A+K-2}{K-1}.
\]
Minimizing it w.r.t.\ $w$ yields
\begin{equation}
\label{eq:wstar}
w^\star = \frac{\tfrac{(K-1)^2}{n_{\mathrm{c}}}q(1-q)}
               {\tfrac{A(1-A)}{n_{\mathrm{o}}}+\tfrac{(K-1)^2}{n_{\mathrm{c}}}q(1-q)}.
\end{equation}
The corresponding minimum variance equals
$
\mathrm{Var}_{\min}(\widehat{A}_{\mathrm{mix}})
= \frac{\tfrac{A(1-A)}{n_{\mathrm{o}}}\cdot \tfrac{(K-1)^2}{n_{\mathrm{c}}}q(1-q)}
       {\tfrac{A(1-A)}{n_{\mathrm{o}}}+\tfrac{(K-1)^2}{n_{\mathrm{c}}}q(1-q)}.$
Since $A$ is unknown, we use plug-in estimators:
$
\widehat{\mathrm{Var}}(\widehat{A}_{\mathrm{ord}})=\tfrac{\widehat{A}_{\mathrm{ord}}(1-\widehat{A}_{\mathrm{ord}})}{n_{\mathrm{o}}}$, and 
$\widehat{\mathrm{Var}}(\widehat{A}_{\mathrm{comp}})=\tfrac{(K-1)^2}{n_{\mathrm{c}}}\,\hat q(1-\hat q),\ \hat q=\tfrac{S_\mathrm{c}}{n_{\mathrm{c}}}.
$
Then
\begin{equation}
\label{eq:wstar_practical}
\widehat{w} 
= \frac{\widehat{\mathrm{Var}}(\widehat{A}_{\mathrm{comp}})}
       {\widehat{\mathrm{Var}}(\widehat{A}_{\mathrm{ord}})
        +\widehat{\mathrm{Var}}(\widehat{A}_{\mathrm{comp}})},
\text{where}~ 
\widehat{A}_{\mathrm{IVW}} = \widehat{w}\,\widehat{A}_{\mathrm{ord}} 
+ (1-\widehat{w})\,\widehat{A}_{\mathrm{comp}}.
\end{equation}
Its (plug-in) variance is the harmonic mean:

$\widehat{\mathrm{Var}}(\widehat{A}_{\mathrm{IVW}})
= \widehat{\mathrm{Var}}(\widehat{A}_{\mathrm{ord}})\,
         \widehat{\mathrm{Var}}(\widehat{A}_{\mathrm{comp}})/(\widehat{\mathrm{Var}}(\widehat{A}_{\mathrm{ord}})
        +\widehat{\mathrm{Var}}(\widehat{A}_{\mathrm{comp}}))$.
Substituting $q=\tfrac{A+K-2}{K-1}$ into Eq.~(\ref{eq:wstar}) gives
\begin{equation}
\label{eq:wstar-closed-form}
w^\star = \frac{(A+K-2)\,n_{\mathrm{o}}}{A\,n_{\mathrm{c}}+(A+K-2)\,n_{\mathrm{o}}},
\end{equation}
and replacing $A$ by a pilot $\tilde A$ (e.g.\ $\tilde A=\widehat{A}_{\mathrm{ord}}$), yields a closed-form estimator of $w^*$.
If we prepare a separate set other than $n_{\mathrm{c}}+n_{\mathrm{o}}$ samples to estimate the weight, we can maintain the unbiasedness of $\hat{A}_\mathrm{mix}$; if not (as in our experiments), we will show in \S~\ref{sec: theoretical-analysis} that we have a consistent estimator.

\paragraph{Closed-Form Maximum-Likelihood Estimator (ML)}
The ordinary set induces a likelihood term in $A$ given by $S_\mathrm{o}\sim\mathrm{Bin}(n_{\mathrm{o}},A)$.
The complementary set induces a likelihood term $S_\mathrm{c}\sim\mathrm{Bin}(n_{\mathrm{c}},q)$ with $q=\frac{A+K-2}{K-1}$. Assuming independence between the two datasets, the joint log-likelihood is
\begin{equation*}
\ell(A) = S_\mathrm{o}\log A + (n_{\mathrm{o}}-S_\mathrm{o})\log(1-A) + S_\mathrm{c}\log q + (n_{\mathrm{c}}-S_\mathrm{c})\log(1-q).
\end{equation*}
Setting $\partial\ell/\partial A=0$ yields the score equation
$
\frac{S_\mathrm{o}}{A} - \frac{n_{\mathrm{o}}-S_\mathrm{o}}{1-A} + \frac{S_\mathrm{c}}{A+K-2} - \frac{n_{\mathrm{c}}-S_\mathrm{c}}{1-A} \;=\; 0,
$
which simplifies to a quadratic form in $A$. Let
$T_\mathrm{o}=n_{\mathrm{o}}-S_\mathrm{o}$, $T_\mathrm{c}=n_{\mathrm{c}}-S_\mathrm{c}$, $\alpha = N$, $\beta=(K-2)(T_\mathrm{o}+T_\mathrm{c})+(K-3)S_\mathrm{o} - S_\mathrm{c}$, $\gamma=-(K-2)S_\mathrm{o}$.
Then the ML is the unique root in $[0,1]$:
\begin{equation}
\label{eq:ML}
\widehat{A}_{\mathrm{ML}} \;=\; \frac{-\beta + \sqrt{\beta^{2}-4\alpha\gamma}}{2\alpha}.
\end{equation}
A large-sample standard error is
$\mathrm{se}(\widehat{A}_{\mathrm{ML}}) \approx 
\left[
\frac{n_{\mathrm{o}}}{\widehat{A}_{\mathrm{ML}}(1-\widehat{A}_{\mathrm{ML}})} +
\frac{n_{\mathrm{c}}}{(K-1)^2\,\hat q(1-\hat q)}
\right]^{-1/2},$
based on observed or expected information.

If $n_{\mathrm{c}}=0$, Eq.~(\ref{eq:ML}) reduces to $\widehat{A}_{\mathrm{ord}}$. 
If $n_{\mathrm{o}}=0$, it reduces to $\widehat{A}_{\mathrm{comp}}$. 
For $K=4$, one always has $\hat q\in[\tfrac{2}{3},1]$ when $A\in[0,1]$, which bounds the complementary-component variance.
Moreover, a one-step Newton update around the truth shows that the IVW estimator with optimal fixed weights coincides with the ML estimator up to the first order\footnote{A detailed proof is provided in Appendix~\ref{app:one-step-ivw}.}, and both are statistically efficient. 

\subsection{Theoretical analysis}
\label{sec: theoretical-analysis}
We analyze the deviation of the complementary-label estimator $\widehat{A}_{\mathrm{comp}}$ under the uniform wrong-index assumption Eq.~(\ref{eq:uniform}). Although $\widehat{A}_{\mathrm{comp}}$ is unbiased, its variance can be non-negligible at small sample sizes. We present two finite-sample deviation bounds: a distribution-free Hoeffding bound~\citep{boucheron2003concentration} and a data-dependent Bernstein bound~\citep{DBLP:conf/colt/MaurerP09} leveraging empirical variance. We then extend these to a fixed-weight mixture estimator.

\subsubsection{Bounds for $\widehat{A}_{\mathrm{comp}}$}

We derive a unified bound for $\widehat{A}_{\mathrm{comp}}$ 
by taking the minimum of the Hoeffding and empirical Bernstein inequalities. 
This formulation is both variance-free (via Hoeffding) 
and variance-adaptive (via Bernstein).The proof is shown in Appendix~\ref{app:pac-comp}.

\begin{theorem}
\label{thm:pac-comp}
With probability at least $1-\delta$,
\[
\bigl| \widehat{A}_{\mathrm{comp}} - A \bigr|
\;\le\; (K-1)\,\min~\!\left\{
  \sqrt{\tfrac{\log(2/\delta)}{2\,n_{\mathrm{c}}}},\;
  \sqrt{\tfrac{2\,\hat q(1-\hat q)}{n_{\mathrm{c}}-1}\,\log\tfrac{4}{\delta}}
  + \tfrac{7\,\log(4/\delta)}{3\,(n_{\mathrm{c}}-1)}
\right\}.
\]
\end{theorem}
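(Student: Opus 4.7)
The plan is to reduce the deviation of $\widehat{A}_{\mathrm{comp}}$ to the deviation of a bounded Bernoulli mean, then apply two standard concentration inequalities to that mean and glue them together with a union bound. The key reduction is the exact identity
\[
\widehat{A}_{\mathrm{comp}} - A \;=\; (K-1)(\hat q - q),
\qquad q \;=\; \tfrac{A+K-2}{K-1},
\]
which follows immediately from the definition $\widehat{A}_{\mathrm{comp}} = (K-1)\hat q - (K-2)$ together with Corollary~\ref{thm:comp-estimator}. Under the uniform wrong-index protocol in Eq.~(\ref{eq:uniform}) and i.i.d.\ sampling, $W_1,\ldots,W_{n_\mathrm{c}}$ are i.i.d.\ Bernoulli($q$) random variables taking values in $\{0,1\}$. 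It therefore suffices to produce a high-probability bound on $|\hat q - q|$ and multiply by $(K-1)$.

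For the Hoeffding branch, I would apply the two-sided Hoeffding inequality to the bounded i.i.d.\ variables $W_i$, giving $|\hat q - q| \le \sqrt{\log(2/\delta)/(2n_\mathrm{c})}$ with probability at least $1-\delta$; multiplying by $(K-1)$ yields the first term inside the minimum. For the empirical Bernstein branch, I would invoke the Maurer--Pontil inequality (a two-sided version, so that the confidence parameter enters as $\log(4/\delta)$): with probability at least $1-\delta$,
\[
|\hat q - q| \;\le\; \sqrt{\tfrac{2\,V_{n_\mathrm{c}}\,\log(4/\delta)}{n_\mathrm{c}}} + \tfrac{7\,\log(4/\delta)}{3(n_\mathrm{c}-1)},
\]
where $V_{n_\mathrm{c}}$ is the unbiased sample variance of $W_1,\ldots,W_{n_\mathrm{c}}$. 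Because the $W_i$ are Bernoulli, a direct calculation gives $V_{n_\mathrm{c}} = \tfrac{n_\mathrm{c}}{n_\mathrm{c}-1}\hat q(1-\hat q)$, so the variance term collapses to $\sqrt{2\hat q(1-\hat q)\log(4/\delta)/(n_\mathrm{c}-1)}$, matching the form in the statement once scaled by $(K-1)$.

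Finally, I would note that each branch is itself a valid high-probability upper bound on $|\widehat{A}_{\mathrm{comp}} - A|$, so the pointwise minimum is also valid on the intersection of the two events; a union bound over the two events yields the claimed inequality (with a mild $\delta$ vs.\ $\delta/2$ bookkeeping that does not affect the rate and can be absorbed by rescaling). The main obstacle is essentially this bookkeeping step, namely confirming that the two inequalities can be simultaneously invoked at a shared confidence level and that the Bernoulli simplification $V_{n_\mathrm{c}} = \tfrac{n_\mathrm{c}}{n_\mathrm{c}-1}\hat q(1-\hat q)$ aligns the coefficients exactly with those in Theorem~\ref{thm:pac-comp}; beyond that, the argument is a textbook application of Hoeffding's and Maurer--Pontil's inequalities to an i.i.d.\ Bernoulli sample.
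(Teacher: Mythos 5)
Your proposal is correct and follows essentially the same route as the paper: reduce $|\widehat{A}_{\mathrm{comp}} - A|$ to $(K-1)\,|\hat q - q|$ and apply Hoeffding's and Maurer--Pontil's inequalities to the i.i.d.\ Bernoulli sample $\{W_i\}$. If anything, you are slightly more careful than the paper's own proof, which writes $n_{\mathrm{c}}$ rather than $n_{\mathrm{c}}-1$ inside the Bernstein variance term (your Bernoulli sample-variance computation $V_{n_{\mathrm{c}}} = \tfrac{n_{\mathrm{c}}}{n_{\mathrm{c}}-1}\hat q(1-\hat q)$ is what actually reconciles the proof with the stated theorem) and silently skips the $\delta$ vs.\ $\delta/2$ union-bound bookkeeping needed to take the minimum of the two branches.
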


The first branch (Hoeffding) is simple and variance-free but conservative. The second branch (empirical Bernstein) adapts to the empirical variance  $\hat q(1-\hat q)$ and is typically tighter when $\hat q$ is close to $0$ or $1$.  
Both bounds require only i.i.d.\ sampling of $\{W_i\}$ under Eq.~(\ref{eq:uniform}), 
and hold for any $n_{\mathrm{c}}$.

\subsubsection{Bounds for $\hat{A}_\mathrm{mix}$}
Consider the mixture estimator with a (possibly data-dependent) weight $w\in[0,1]$:
$
\widehat{A}_{\mathrm{mix}}
= w\,\widehat{A}_{\mathrm{ord}} + (1-w)\,\widehat{A}_{\mathrm{comp}}$ where
$\widehat{A}_{\mathrm{ord}}=\tfrac{1}{n_{\mathrm{o}}}\sum_{j=1}^{n_{\mathrm{o}}} z_j$, and $z_j\in\{0,1\}$.
Since the ordinary and complementary sets are disjoint, we have
$
\widehat{A}_{\mathrm{mix}}-A
= w(\widehat{A}_{\mathrm{ord}}-A) + (1-w)(\widehat{A}_{\mathrm{comp}}-A)$.
With a union bound and Hoeffding inequality, we can derive the following proposition.
\begin{proposition}
\label{prop:mix-union}
For any split $\delta=\delta_\mathrm{o}+\delta_\mathrm{c}$, with $\delta_{\mathrm{o}}, \delta_{\mathrm{c}}>0$, with probability at least $1-\delta$,
\[
\bigl|\widehat{A}_{\mathrm{mix}}-A\bigr|
\;\le\;
\min~\!\left\{
\begin{aligned}
& w\,\sqrt{\tfrac{\log(2/\delta_\mathrm{o})}{2\,n_{\mathrm{o}}} }
+ (1-w)(K-1)\,\sqrt{\tfrac{\log(2/\delta_\mathrm{c})}{2\,n_{\mathrm{c}}}}, \\[6pt]
& w\!\left(
\sqrt{\tfrac{2\,\widehat{A}_{\mathrm{ord}}(1-\widehat{A}_{\mathrm{ord}})}{n_{\mathrm{o}}-1}\,
      \log\tfrac{4}{\delta_\mathrm{o}}}
+ \tfrac{7\,\log(4/\delta_\mathrm{o})}{3\,(n_{\mathrm{o}}-1)}
\right) \\
&\quad + (1-w)\!\left(
\sqrt{\tfrac{2\,(K-1)^2\,\hat q(1-\hat q)}{n_{\mathrm{c}} - 1}\,
      \log\tfrac{4}{\delta_\mathrm{c}}}
+ \tfrac{7\,(K-1)\,\log(4/\delta_\mathrm{c})}{3\,(n_{\mathrm{c}}-1)}
\right)
\end{aligned}
\right\}.
\]
A symmetric choice $\delta_\mathrm{o}=\delta_\mathrm{c}=\delta/2$ yields the simpler logs $\log(4/\delta)$ and $\log(8/\delta)$ respectively.
\end{proposition}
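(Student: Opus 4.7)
The plan is to reduce the deviation of $\widehat{A}_{\mathrm{mix}}$ to the deviations of its two disjoint components, bound each component via either Hoeffding's or the empirical Bernstein inequality, and glue the pieces together with a union bound.

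First, I would write the identity
\[
\widehat{A}_{\mathrm{mix}}-A \;=\; w\,(\widehat{A}_{\mathrm{ord}}-A)+(1-w)\,(\widehat{A}_{\mathrm{comp}}-A),
\]
and use $w\in[0,1]$ together with the triangle inequality to obtain the pointwise bound
\[
\bigl|\widehat{A}_{\mathrm{mix}}-A\bigr|\;\le\; w\,\bigl|\widehat{A}_{\mathrm{ord}}-A\bigr|+(1-w)\,\bigl|\widehat{A}_{\mathrm{comp}}-A\bigr|.
\]
This inequality holds for any realized value of $w$, which is the crucial feature for accommodating the data-dependent weight below.

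Next, I would control each absolute deviation separately. For the ordinary component, the $Z_j$ are i.i.d.\ Bernoulli$(A)$ variables in $[0,1]$, so Hoeffding's inequality gives $|\widehat{A}_{\mathrm{ord}}-A|\le \sqrt{\log(2/\delta_\mathrm{o})/(2n_\mathrm{o})}$ with probability at least $1-\delta_\mathrm{o}$, and the two-sided empirical Bernstein inequality \citep{DBLP:conf/colt/MaurerP09} yields the data-adaptive bound $\sqrt{2\widehat{A}_{\mathrm{ord}}(1-\widehat{A}_{\mathrm{ord}})\log(4/\delta_\mathrm{o})/(n_\mathrm{o}-1)}+7\log(4/\delta_\mathrm{o})/(3(n_\mathrm{o}-1))$ at the same confidence. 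For the complementary component, I would invoke Theorem~\ref{thm:pac-comp} at confidence $1-\delta_\mathrm{c}$; pulling the factor $(K-1)$ inside the Bernstein square root produces the $(K-1)^2$ in the stated bound, while the residual term picks up a factor $(K-1)$ as written.

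Finally, because the ordinary and complementary sets are disjoint and hence independent, the two good events above are independent, so a union bound shows that both hold simultaneously with probability at least $1-\delta_\mathrm{o}-\delta_\mathrm{c}=1-\delta$. On this joint good event, inserting the Hoeffding pair and the Bernstein pair into the triangle-inequality decomposition gives the two branches of the stated bound, and taking their pointwise minimum yields the proposition; the symmetric choice $\delta_\mathrm{o}=\delta_\mathrm{c}=\delta/2$ recovers the simplified logarithmic constants mentioned at the end of the statement. The only genuinely subtle point, and what I would treat as the main (mild) obstacle, is accommodating a data-dependent weight such as the plug-in $\widehat{w}$ from Eq.~(\ref{eq:wstar_practical}); this is resolved automatically because the two concentration events are defined without reference to $w$, and the triangle inequality is pointwise, so the realized random value of $w$ can simply be substituted after the union bound has been applied.
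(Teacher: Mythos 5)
Your proof is correct and takes essentially the same route the paper intends for Proposition~\ref{prop:mix-union}: decompose $\widehat{A}_{\mathrm{mix}}-A$ via the triangle inequality using $w\in[0,1]$, bound each arm by Hoeffding or the empirical Bernstein inequality (rescaling the complementary arm by $K-1$), and combine with a union bound over $\delta_\mathrm{o}+\delta_\mathrm{c}$; your observation that the two concentration events are defined without reference to $w$, so any realized data-dependent weight can be substituted afterwards, is precisely how the paper justifies validity for plug-in weights. The only minor looseness --- that presenting the result as a pointwise minimum of the Hoeffding and Bernstein branches strictly requires one further union bound over those two branches --- is inherited from the paper's own Theorem~\ref{thm:pac-comp} and the proposition statement itself, and is not a defect you introduced.
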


The union-bound approach provides guarantees that hold for \emph{any} weight $w\in[0,1]$, 
including data-dependent plug-in choices such as $\widehat w_{\mathrm{IVW}}$. 
This makes it directly applicable in practice, although the constants are typically looser 
than those obtained from the moment generating function based Bernstein bound (See Theorem~\ref{thm:bernstein_mixture}).

We now state a Bernstein-type PAC bound for the mixed estimator that 
\emph{complements} the union-bound inequalities established in 
Proposition~\ref{prop:mix-union}; its proof is deferred to 
Appendix~\ref{app:proof-bernstein}.

\begin{theorem}
\label{thm:bernstein_mixture}
With probability at least $1-\delta$, the following holds for $\widehat{A}_{\mathrm{mix}}$:
\begin{equation}
\label{eq:bernstein_mixture}
\bigl|\widehat{A}_{\mathrm{mix}} - A\bigr|
\;\le\;
\sqrt{\,2v\,\log\tfrac{2}{\delta}\,}
\;+\;
c\,\log\tfrac{2}{\delta},
\end{equation}
where $v
= w^2\tfrac{A(1-A)}{n_{\mathrm{o}}} + (1-w)^2\tfrac{(K-1)^2\,q(1-q)}{n_{\mathrm{c}}}$, and $c=\max~\!\Bigl\{\tfrac{w}{n_{\mathrm{o}}},\ \tfrac{(1-w)(K-1)}{n_{\mathrm{c}}}\Bigr\}.$
\end{theorem}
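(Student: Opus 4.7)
The plan is to write $\widehat{A}_{\mathrm{mix}} - A$ as a single sum of $N = n_{\mathrm{o}} + n_{\mathrm{c}}$ independent, centered, bounded random variables and apply a standard two-sided Bernstein inequality to that combined sum. This leverages the full independence across the two disjoint datasets, which is precisely the structural property that the union-bound argument in Proposition~\ref{prop:mix-union} does not exploit, and it is what allows a single $\sqrt{2v \log(2/\delta)}$ term in place of the two separate sub-Gaussian tails.

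First I would decompose
\[
\widehat{A}_{\mathrm{mix}} - A \;=\; \sum_{j=1}^{n_{\mathrm{o}}} U_j \;+\; \sum_{i=1}^{n_{\mathrm{c}}} V_i,
\qquad
U_j = \tfrac{w}{n_{\mathrm{o}}}(Z_j - A), \qquad
V_i = \tfrac{(1-w)(K-1)}{n_{\mathrm{c}}}(W_i - q),
\]
using $A = (K-1)q - (K-2)$ to re-express $(1-w)(\widehat{A}_{\mathrm{comp}} - A)$ as $(1-w)(K-1)(\hat q - q)$. Since $Z_j \in [0,1]$ and $W_i \in [0,1]$, each centered summand is bounded by $|U_j| \le w/n_{\mathrm{o}}$ and $|V_i| \le (1-w)(K-1)/n_{\mathrm{c}}$, so every term in the combined sum is bounded by $c = \max\{w/n_{\mathrm{o}}, (1-w)(K-1)/n_{\mathrm{c}}\}$. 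A direct variance tally gives $\sum_j \mathrm{Var}(U_j) + \sum_i \mathrm{Var}(V_i) = w^2 A(1-A)/n_{\mathrm{o}} + (1-w)^2 (K-1)^2 q(1-q)/n_{\mathrm{c}} = v$, with no cross terms by the independence of the ordinary and complementary sets.

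Next I would invoke the classical two-sided Bernstein inequality: for independent centered $\xi_\ell$ with $|\xi_\ell| \le c$ and $\sum_\ell \mathrm{Var}(\xi_\ell) \le v$,
\[
\Pr\!\left(\left|\textstyle\sum_\ell \xi_\ell\right| \ge t\right)
\;\le\; 2\exp\!\left(-\tfrac{t^2/2}{v + c t/3}\right).
\]
Setting the right-hand side to $\delta$ produces a quadratic in $t$; solving it (or using the elementary inequality $\sqrt{a+b} \le \sqrt{a}+\sqrt{b}$ to decouple the two regimes) yields $t \le \sqrt{2 v \log(2/\delta)} + \tfrac{2c}{3}\log(2/\delta)$, which is upper bounded by the stated $\sqrt{2 v \log(2/\delta)} + c\,\log(2/\delta)$.

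The main obstacle is conceptual rather than technical: the theorem is stated for an arbitrary but \emph{fixed} weight $w$, whereas in our practical IVW and ML estimators the weight is data-dependent. For a genuinely data-driven $\widehat{w}$, the bound must be upgraded either by reserving a disjoint pilot sample to determine the weight, by grid-covering $[0,1]$ and paying a logarithmic union-bound penalty, or by arguing that $\widehat{w} \to w^\star$ fast enough that a perturbation term is negligible. Everything else in the proof — verifying the magnitude bound $c$, computing $v$ via independence, and inverting Bernstein — is essentially mechanical.
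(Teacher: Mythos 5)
Your proposal is correct and follows essentially the same route as the paper's proof in Appendix~\ref{app:proof-bernstein}: both decompose $\widehat{A}_{\mathrm{mix}}-A$ into a single sum of $n_{\mathrm{o}}+n_{\mathrm{c}}$ independent centered variables with the same scalings, verify the same bounds $c$ and total variance $v$, and apply a Bernstein-type inequality (the paper invokes Theorem~2.10 of \citet{boucheron2003concentration} after checking its moment condition, while you invert the classical $v+ct/3$ form, which yields the slightly sharper constant $2c/3$ before relaxing to $c$). Your closing remark about fixed versus data-dependent $w$ matches the caveat the paper itself states immediately after the theorem.
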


This bound tightens Hoeffding’s inequality by incorporating
both the variance term $v$ and the (range-driven) linear term $c$.
It is valid for any fixed choice of the weight $w\in[0,1]$, for instance if $w$ is set 
as a constant or estimated on an independent pilot set.
If instead $w$ is estimated from the same evaluation data (e.g.\ the plug-in 
$\widehat w_{\mathrm{IVW}}$), the fixed-weight assumption is violated; in that case one must
either revert to the union-bound bounds (Proposition \ref{prop:mix-union}), or apply standard remedies
such as sample splitting \citep{chernozhukov2018double, wager2018estimation}or a finite grid with a union bound \citep{boucheron2013concentration, shalev2014understanding}.
For practical plug-in forms and further discussion, see Appendix~\ref{app:berstein-plug-in}.

\section{Experiments}
We evaluate our unbiased and mixed estimators in three settings:  
\textbf{(I) Statistical validation} on standard multiple-choice benchmarks, where we verify unbiasedness and variance properties (see Eqs.~(\ref{eq:linear-transformation}), (\ref{eq:var-ratio}), (\ref{eq:mix_plug_ivw}));  
\textbf{(II) Practical evaluation} on two real-world classification benchmarks: 
the Japanese financial dataset EDINET-Bench~\citep{sugiura2025edinetbenchevaluatingllmscomplex}, 
where each option corresponds to a professional industry domain and annotators provide weak complementary feedback; 
and the English Medical Abstracts dataset~\citep{10.1145/3582768.3582795}, 
which covers multiple disease categories with natural clinical text.
\textbf{(III) Agentic training} in agent-based workflows, where we replace ordinary accuracy with our proposed estimator as the fitness function and examine the resulting agent performance.  
For agent scaffolding, we use ADAS \citep{hu2025automated} and AFlow \citep{zhang2025aflow}.
For details of the labeling protocol, please refer to Appendix~\ref{app:protocol}.

\subsection{Statistical Validation of Complementary and Mixed Estimators}
\label{sec:exp1}
Statistical validation is conducted on four benchmarks. Since oracle labels are available, 
complementary labels are synthetically constructed following Appendix~\ref{app:protocol}. 
We report three types of evaluation metrics.  
\textbf{(A)} Unbiasedness and variance are examined by subsampling 300 ordinary and 300 complementary labels 
(120 for GPQA due to dataset size). Accuracy is estimated using Eq.~(\ref{eq:linear-transformation}), 
and variability is quantified through standard deviations, reported as 
$\widehat{A} \pm \widehat{\sigma}$ with $\widehat{\sigma}$ denoting the estimated standard deviation.  
\textbf{(B)} The relative efficiency of complementary labels is assessed by computing the number of 
complementary samples required to achieve the same variance as $n_{\mathrm{o}}$ ordinary labels, 
according to Eq.~(\ref{eq:var-ratio}), where the full ordinary dataset is used to approximate the 
ground-truth accuracy $A$.  
\textbf{(C)} To mimic practical annotation protocols (Appendix~\ref{app:protocol}), 
we assume that each ordinary label can be replaced by $(K-1)$ complementary labels, 
where $K$ is the number of answer options. This setting is used to evaluate the mixed estimators 
in Eq.~(\ref{eq:wstar_practical}) and Eq.~(\ref{eq:ML}).

Each dataset is evaluated on all five metrics using \texttt{gpt-5-nano} \citep{openai_gpt5_nano_2025}, with three independent runs averaged; aggregated outcomes are summarized in Table~\ref{tab:estimator_results}. For the MATH-MC benchmark, we use \texttt{GPT-4.1-nano} instead of \texttt{GPT-5-nano}.
This choice is due to (i) output-format constraints in \texttt{GPT-5-nano} that conflict with our
final-answer extraction protocol, and (ii) the need to avoid ceiling effects: in our pilot runs,
\texttt{GPT-5-nano} attains near-ceiling accuracy on MATH-MC, which compresses the observable
gaps among estimators.

\begin{table}[ht]
\centering
\caption{Performance of estimators across benchmarks.
Values are reported as mean accuracy $\pm$ standard deviation across three random seeds;
the average of per-run standard deviations is additionally shown in parentheses. A brief description of the estimators is provided in the legend below
Detailed estimator setups and confidence-interval analyses are provided in 
Appendix~\ref{app:appendix-estimator}.
The last column reports the macro-average across datasets (counting Math and Math-COT separately).
For each benchmark, the estimator whose mean accuracy is closest to the Ord-Eval reference is highlighted in bold. Column-wise deviations from the oracle ($\Delta$) are reported in Appendix~\ref{app:delta-table}.}
\label{tab:estimator_results}
\begin{threeparttable}
\adjustbox{max width=\linewidth}{
\begin{tabular}{lcccccc} 
\toprule
\textbf{Estimator} 
& \textbf{MMLU-Pro} 
& \textbf{MedQA-USMLE} 
& \textbf{GPQA} 
& \textbf{MATH}\tnote{$\dagger$} 
& \textbf{MATH(CoT)}\tnote{$\ddagger$} 
& \textbf{Average} \\
\midrule
Ord      & 78.33 $\pm$ 1.73 (\textcolor{gray}{ $\pm$2.38})& 92.89 $\pm$ 1.35 (\textcolor{gray}{ $\pm$1.48})& 64.17 $\pm$ 1.67 (\textcolor{gray}{ $\pm$4.39})& 47.56 $\pm$ 3.91 (\textcolor{gray}{ $\pm$2.88})& 84.89 $\pm$ 0.77 (\textcolor{gray}{ $\pm$2.07})& 73.57 \\

Comp-$\text{n}_{\text{o}}$ & 77.00 $\pm$ 12.49 (\textcolor{gray}{ $\pm$7.95})& \textbf{92.67 $\pm$ 1.53 (\textcolor{gray}{ $\pm$2.67})}& \textbf{59.17 $\pm$ 3.82 (\textcolor{gray}{ $\pm$9.42})}& 48.44 $\pm$ 10.78 (\textcolor{gray}{ $\pm$7.72})& 80.44 $\pm$ 2.78 (\textcolor{gray}{ $\pm$4.98})& 71.54 \\

Comp-Var  & 75.67 $\pm$ 2.15 (\textcolor{gray}{ $\pm$2.51})& 90.61 $\pm$ 1.43 (\textcolor{gray}{ $\pm$1.69})& 63.67 $\pm$ 5.01 (\textcolor{gray}{ $\pm$4.28})& 41.10 $\pm$ 3.17 (\textcolor{gray}{ $\pm$2.93})& 81.35 $\pm$ 0.29 (\textcolor{gray}{ $\pm$2.28})& 70.48 \\

IVW-0.5  & 77.89 $\pm$ 1.58 (\textcolor{gray}{ $\pm$1.80})& 91.61 $\pm$ 1.11 (\textcolor{gray}{ $\pm$1.15})& 65.28 $\pm$ 1.34 (\textcolor{gray}{ $\pm$3.33})& 43.44 $\pm$ 3.95 (\textcolor{gray}{ $\pm$2.52})& 83.56 $\pm$ 1.17 (\textcolor{gray}{ $\pm$1.58})& 72.36 \\

IVW       & \textbf{77.97 $\pm$ 1.58 (\textcolor{gray}{ $\pm$1.79})}& 91.86 $\pm$ 1.11 (\textcolor{gray}{ $\pm$1.13})& 65.14 $\pm$ 1.38 (\textcolor{gray}{ $\pm$3.30})& 44.87 $\pm$ 3.82 (\textcolor{gray}{ $\pm$2.37})& \textbf{83.86 $\pm$ 0.83 (\textcolor{gray}{ $\pm$1.56})}& 72.74 \\

ML      & 77.94 $\pm$ 1.58 (\textcolor{gray}{ $\pm$1.79})& 91.65 $\pm$ 1.08 (\textcolor{gray}{ $\pm$1.18})& 65.11 $\pm$ 1.38 (\textcolor{gray}{ $\pm$3.28})& \textbf{44.75 $\pm$ 3.79 (\textcolor{gray}{ $\pm$2.36})}& 83.65 $\pm$ 1.04 (\textcolor{gray}{ $\pm$1.59})& 72.62 \\

\midrule\midrule
Ord-Eval  & 77.97  & 92.66 & 59.52  & 44.21  & 83.89  & -- \\
\bottomrule
\end{tabular}
}
\end{threeparttable}
{\scriptsize
\begin{minipage}{0.96\linewidth}
\textbf{Legend:}

Ord = ordinary-label estimator;
Comp-$n_o$ = complementary labels with the same sample size as ordinary;
Comp-Var = complementary labels with variance-matched sample size;
IVW-0.5 = fixed 0.5/0.5 weighting of ordinary and complementary estimates;
IVW = inverse-variance–weighted combination with plug-in weights;
ML = joint maximum-likelihood estimator using both label types;
Ord-Eval = oracle accuracy on the full dataset.
\end{minipage}
}
\end{table}

Table~\ref{tab:estimator_results} summarizes the results across benchmarks,
together with a reference evaluation using all available ordinary labels (Ord-Eval).
Importantly, higher accuracy in isolation does not imply a better estimator;
the desideratum is to approximate the Ord-Eval reference as closely and reliably as possible under scarce supervision.
With finite ordinary labels and $(K-1)$-times complementary labels (where $K$ is the number of choices),
Ord and Comp-$n_{\mathrm{o}}$ can occasionally yield point estimates that are numerically closer to Ord-Eval
(as seen in MedQA and GPQA).
However, their standard errors are substantially wider, 
indicating poor reliability and limited reproducibility.
For Comp-Var, increasing the number of complementary labels reduces variance, 
consistent with Eq.~(\ref{eq:var-comp}) and Eq.~(\ref{eq:var-ratio}).
We report both across-seed variability and the average within-run standard deviation,
the latter more directly reflecting estimator variance.
Ord-Eval itself should not be regarded as the ground-truth accuracy---especially on GPQA,
where dataset size is small and uncertainty remains substantial---but rather as an empirical reference.

\begin{wrapfigure}{r}{0.55\textwidth}  
    \centering
    \begin{minipage}{0.48\linewidth}
        \centering
        \includegraphics[width=\linewidth]{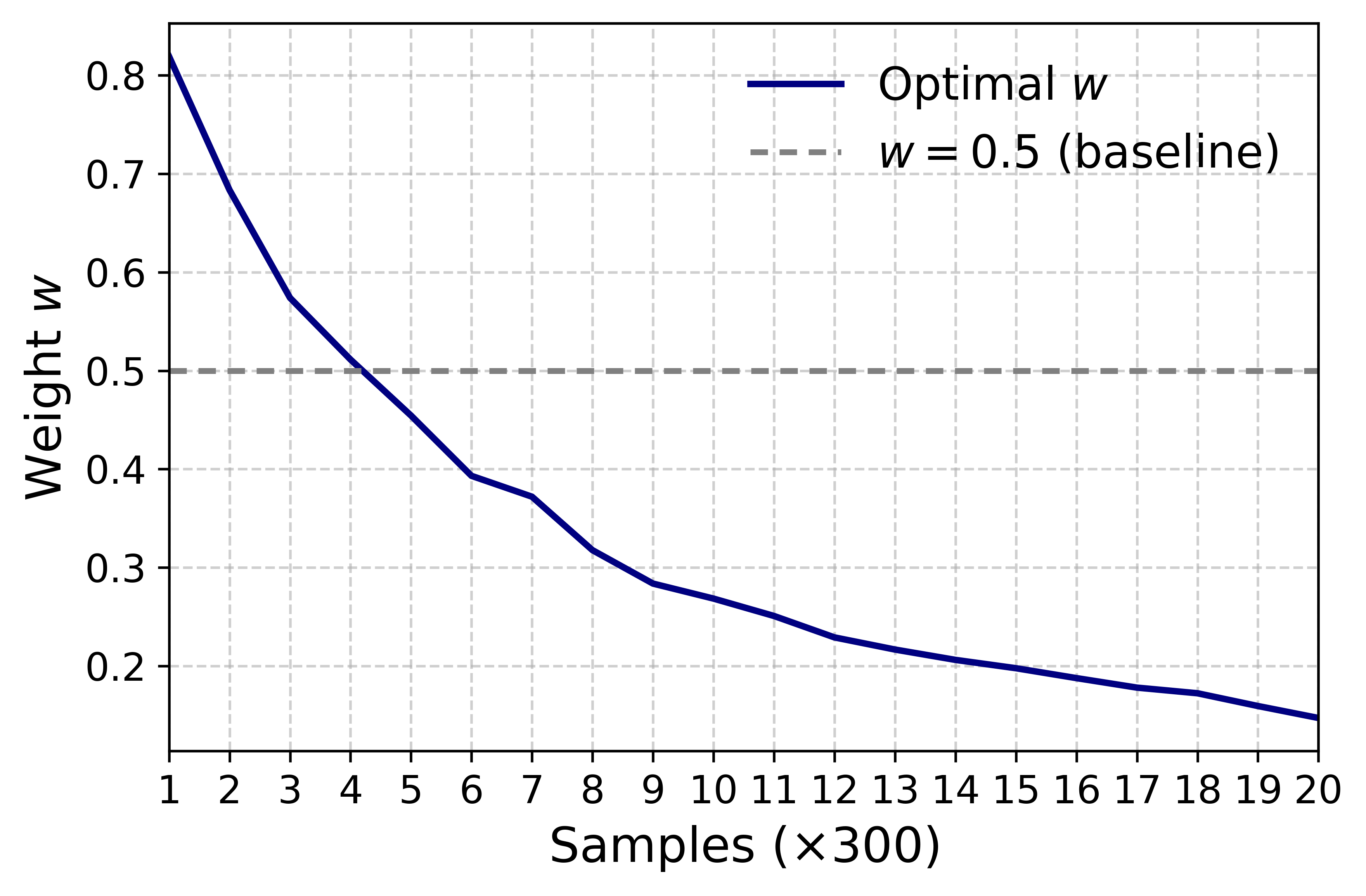}
        \caption*{(a) Optimal $w$ vs.\ number of samples.}
    \end{minipage}\hfill
    \begin{minipage}{0.48\linewidth}
        \centering
        \includegraphics[width=\linewidth]{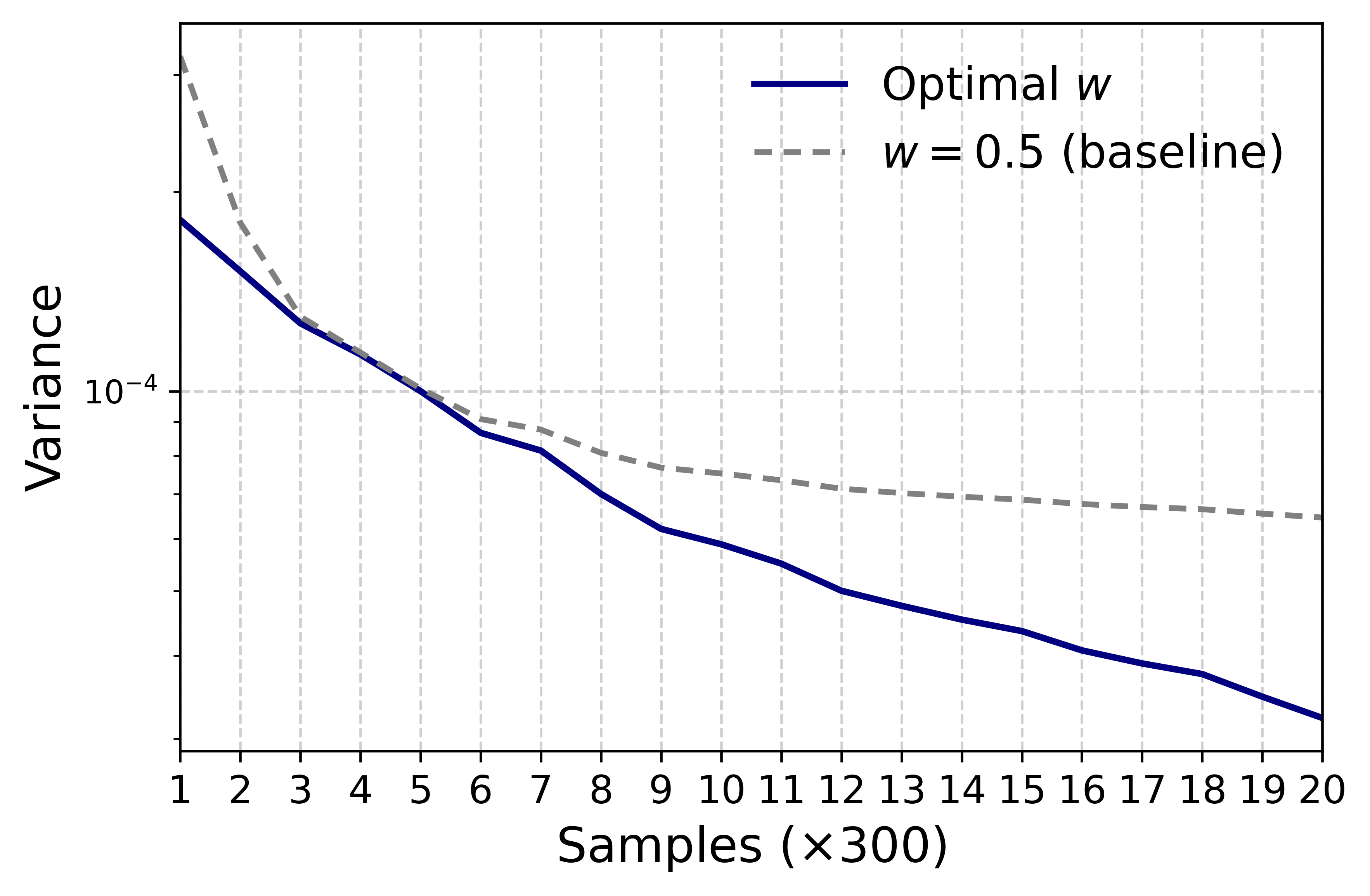}
        \caption*{(b) Corresponding variance on a log scale.}
    \end{minipage}
    \caption{
        Comparison of optimal weighting and baseline ($w=0.5$). 
        Optimal weighting consistently achieves lower variance compared to the equal-weighted baseline,
        demonstrating more reliable estimation with limited samples.
    }
    \label{fig:optimal_w}
    \vspace{-1mm}
\end{wrapfigure}

Mixture estimators (IVW and ML) reduce the gap to the Ord-Eval reference compared with both ordinary-only and complementary-only estimators across most benchmarks. Among mixture estimators, IVW consistently improves over the equal-weighted baseline in within-run standard deviation (IVW-0.5),
confirming that inverse-variance weighting provides the optimal linear combination of ordinary and complementary labels.
ML further achieves narrower intervals owing to its joint likelihood formulation.
However, ML requires more precise pilot estimates of accuracy and may thus be less robust in practice,
whereas IVW remains stable by directly incorporating variance estimates.
This distinction is supported by the within-run standard deviations,
which indicate that IVW achieves competitive efficiency with simpler computation.

Overall, while Comp-$n_{\mathrm{o}}$ may occasionally appear closest to Ord-Eval in mean accuracy,
its instability makes it unreliable for reproducible evaluation.
IVW and ML instead strike the best balance between bias and variance,
yielding stable and reproducible estimates close to the Ord-Eval reference across datasets.
The empirical closeness of the two methods further validates our theoretical analysis,
suggesting that mixture estimators can serve as a practical foundation for scalable oversight
in settings where ground-truth labels are scarce or infeasible.

\paragraph{Ablation Study on IVW.}

Recall that the optimal coefficient for IVW is given in Eq.~(\ref{eq:wstar-closed-form}).
When setting $n_{\mathrm{c}} = (K-1)n_{\mathrm{o}}$ (where $K$ denotes the number of choices), the expression simplifies to
$
\frac{A + K - 2}{A K + K - 2} 
    = \frac{1}{K}\left(1 + \frac{(K-1)(K-2)}{A K + K - 2}\right).
$
Since $0 \leq A \leq 1$, the coefficient lies in the range
$
    \tfrac{1}{2} \;\leq\; 
    \tfrac{1}{K}\Big(1 + \tfrac{(K-1)(K-2)}{A K + K - 2}\Big) 
    \;\leq\; 1.
$
In particular, when $A$ is close to $1$---as suggested by the Ord-Eval reference---the optimal weight approaches $\frac{1}{2}$, explaining why in Table~\ref{tab:estimator_results} the IVW estimator often performs similarly to the ordinary baseline.

To further examine whether the closed-form coefficient yields the optimal trade-off, 
we conduct an ablation on \textbf{MedQA-USMLE} by varying the number of complementary labels 
from $n_{\mathrm{c}} = n_{\mathrm{o}}$ to $n_{\mathrm{c}} = 20n_{\mathrm{o}}$ (with $n_{\mathrm{o}}=300$). 
Results are averaged over three random seeds. 
As shown in Figure~\ref{fig:optimal_w}, when $n_{\mathrm{c}} \approx 3n_{\mathrm{o}}$ (the setting used in 
Table~\ref{tab:estimator_results}), the optimal weight is close to the fixed baseline $w=0.5$. 
However, for $n_{\mathrm{c}} < n_{\mathrm{o}}$ or $n_{\mathrm{c}} > 6n_{\mathrm{o}}$, the optimal variance is consistently lower than the baseline. 
This behavior is intuitive: once the number of complementary labels exceeds the variance-matching 
threshold in Eq.~(\ref{eq:var-ratio}), the complementary estimator alone has smaller variance, 
and the closed-form IVW solution naturally shifts weight away from ordinary labels. 
By contrast, the fixed baseline cannot adapt, leading to consistently higher variance.

\subsection{Benchmark Evaluation in Real-World Tasks}
To demonstrate the practicality of our method in real-world scenarios, 
we conduct proof-of-concept experiments for \emph{estimation and learning under partitioned human supervision}, 
as directly collecting superhuman tasks would be prohibitively costly. 
Following the data collection protocol in Algorithm~\ref{alg:collection}, 
for each domain we randomly select an expert and ask whether a given object belongs to their field. 
Applied to the benchmarks, complementary labels are generated from the oracle labels by splitting each instance into $1$ ordinary label and $K-1$ complementary labels, corresponding to the $K$ candidate classes.  

We evaluate on two benchmarks.
First, we use EDINET-Bench \citep{sugiura2025edinetbenchevaluatingllmscomplex}, a Japanese financial report classification dataset, where each industry corresponds to a professional domain, and we adopt the prompting setup of \citet{sugiura2025edinetbenchevaluatingllmscomplex}. 
Since the original dataset contains only 496 samples (comparable to GPQA), we extend it following their released code, resulting in 3,269 samples; we refer to this as EDINET-Bench Extended.
To our knowledge, there are no public medical benchmarks in which each class explicitly corresponds to a medical specialization. 
We therefore adopt the Medical Abstracts dataset~\citep{10.1145/3582768.3582795}, which consists of paper abstracts labeled by disease categories, simulating the task of narrow domain experts classifying diseases. 

Both EDINET-Bench Extended and Medical Abstracts are imbalanced across classes. 
Nevertheless, our assumption in Eq.~(\ref{eq:uniform}) is that complementary labels are sampled uniformly at random, independent of the original label distribution. 
We follow this procedure and empirically verify that class imbalance does not affect the validity of our estimator.

\begin{table}[t]
\centering
\begin{minipage}[t]{0.34\linewidth}
\caption{
Mean acc.~$\pm$ standard deviation~across 3 seeds, 
w/~the average of per-run standard deviations. 
Bold: Lowest variance across 3 runs. 
See App.~\ref{app:appendix-estimator} for details.}
\label{tab:edinet_results}
\end{minipage}
\hfill
\begin{minipage}[t]{0.65\linewidth}
\begin{threeparttable}
\adjustbox{max width=\linewidth}{
\begin{tabular}{lccc}
\toprule
\textbf{Estimator} 
& \textbf{EDINET} 
& \textbf{EDINET (Extended)} 
& \textbf{Medical Abstract}  \\
\midrule
Ord      
& 19.35 $\pm$ 5.59 (\textcolor{gray}{$\pm$7.13})  
& 21.67 $\pm$ 2.46 (\textcolor{gray}{$\pm$2.89})  
& 70.38 $\pm$ 0.62 (\textcolor{gray}{$\pm$0.96})  \\
Comp-(K-1)  
& 10.75 $\pm$ 8.12 (\textcolor{gray}{$\pm$16.46})  
& 26.60 $\pm$ 3.45 (\textcolor{gray}{$\pm$5.86})  
& 68.79 $\pm$ 0.71 (\textcolor{gray}{$\pm$1.13})  \\
IVW-0.5  
& 15.05 $\pm$ 5.18 (\textcolor{gray}{$\pm$8.98})  
& 24.14 $\pm$ 2.26 (\textcolor{gray}{$\pm$3.27})  
& 69.58 $\pm$ 0.57 (\textcolor{gray}{$\pm$0.74})  \\
IVW  
& 17.95 $\pm$ 4.88 (\textcolor{gray}{$\pm$6.53})  
& 22.64 $\pm$ 2.27 (\textcolor{gray}{$\pm$2.59})  
& \textbf{69.71 $\pm$ 0.56 (\textcolor{gray}{$\pm$0.73})}  \\
ML
& \textbf{17.93 $\pm$ 4.87 (\textcolor{gray}{$\pm$6.29})}  
& \textbf{22.61 $\pm$ 2.28 (\textcolor{gray}{$\pm$2.26})} 
& 69.70 $\pm$ 0.56 (\textcolor{gray}{$\pm$0.74})  \\
\bottomrule
\end{tabular}
}
\end{threeparttable}
\end{minipage}
\end{table}

\paragraph{Results}
As shown in Table~\ref{tab:edinet_results}, both IVW and ML consistently achieve the lowest 
average per-run standard deviation, confirming the variance reduction effect observed in 
\S~\ref{sec:exp1}. In contrast, the Comp-(K-1) estimator exhibits much larger variance 
on EDINET-Bench, which aligns with Eq.~(\ref{eq:var-ratio}): with $K=16$ industries and relatively 
low accuracy, the variance is amplified.  

Comparing EDINET-Bench and its extended version, we observe that increasing the number of samples 
substantially reduces the fluctuations of the estimators, supporting our earlier finding in 
\S~\ref{sec:exp1} that small-scale benchmarks such as GPQA can lead to unstable estimates.  
Finally, on the Medical Abstracts dataset, IVW and ML again achieve the most stable performance, 
demonstrating that the complementary-label estimators generalize across domains and languages.

\subsection{Agent Search with Estimator Feedback}
Finally, we show that weak human supervision can be leveraged not only for evaluation but also as training signal for AI systems.
We evaluate agentic methods under complementary-label signals on three benchmarks. 
GPQA serves as the most challenging setting in our study (Table~\ref{tab:estimator_results}); 
Math-MC focuses on reasoning ability; 
and Medical Abstracts best captures our practical scenario. 
Following \citet{hu2025automated}, we include \textsc{CoT}~\citep{NEURIPS2022_9d560961}, \textsc{CoT-SC}~\citep{wang2023selfconsistency}, Self-Refine~\citep{madaan2023selfrefine}, LLM Debate~\citep{du2023improving}, Step-back Abstraction~\citep{zheng2024take}, Quality-Diversity~\citep{lu2025intelligent}, and Role Assignment~\citep{xu2025expertpromptinginstructinglargelanguage} as manually designed baselines, and compare against ADAS~\citep{hu2025automated} and AFlow~\citep{zhang2025aflow}. 
We define the accuracy computed by $q$ as \emph{complementary accuracy} and apply Eq.~(\ref{eq:linear-transformation}) to obtain the \emph{transformed complementary accuracy}. 
For ADAS and AFlow, we follow the original protocol (see Appendix~\ref{app:auto-flow}), computing both complementary and transformed complementary accuracy on the validation set and reporting the better-performing variant.

\begin{figure}[t]
  \centering
  \begin{minipage}{0.70\linewidth}
    \centering
    \includegraphics[width=\linewidth]{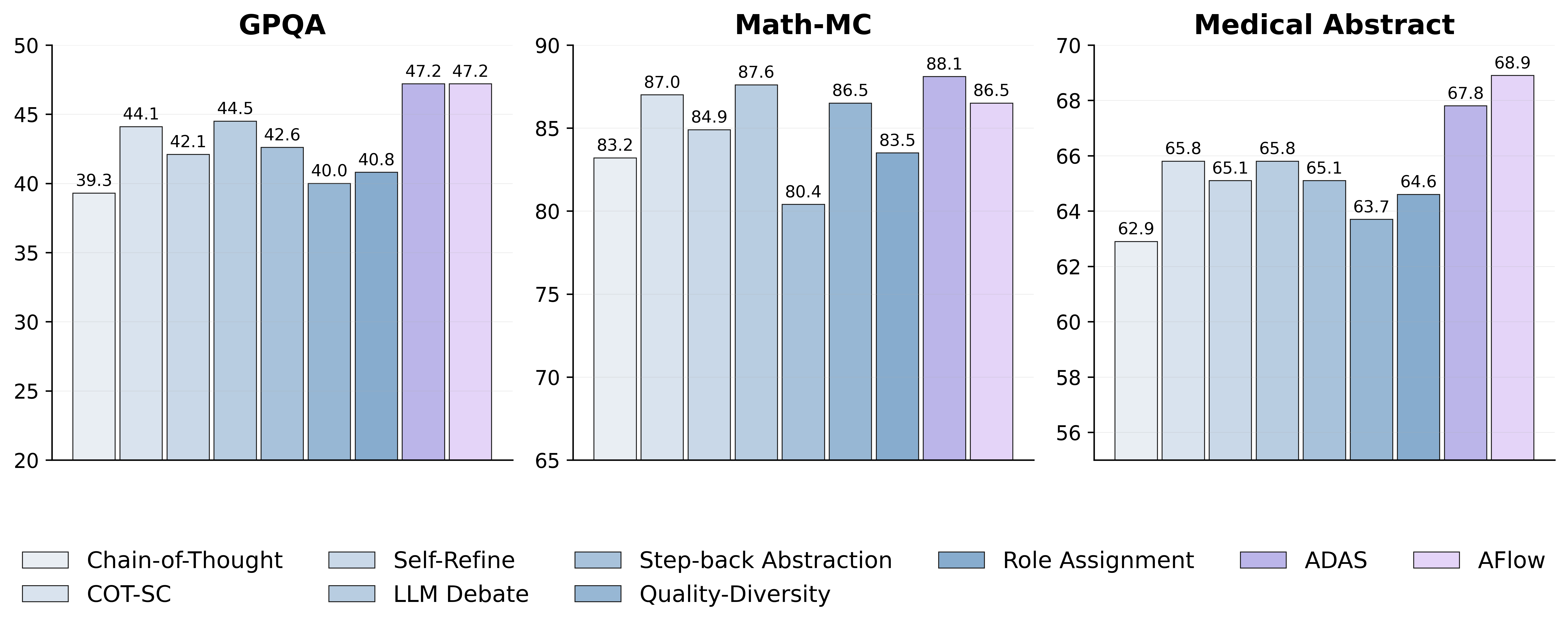}
  \end{minipage}
  \hfill
  \begin{minipage}{0.28\linewidth}
    \caption{Accuracy across different benchmarks.
Across GPQA, Math-MC, and Medical Abstract, agentic systems guided by weak complementary signals (ADAS and AFlow) consistently outperform manually designed baselines. }
    \label{fig:agent_evolution}
  \end{minipage}
\end{figure}

Figure~\ref{fig:agent_evolution} compares ADAS and AFlow against strong baselines on GPQA, Math-MC, and Medical Abstracts. 
Both agentic systems consistently outperform manually designed workflows once weak complementary-label signals are incorporated, demonstrating that weak human feedback can provide learning signal. 
While the choice of estimator (raw vs.\ transformed complementary accuracy) leads to different outcomes depending on task difficulty, the overall pattern remains robust: weak signals suffice to improve performance beyond established baselines, with additional variance analysis provided in Appendix~\ref{app:auto-result}.
To further illustrate the resulting agentic system's behavior, we present a case study in Appendix~\ref{app:agent-case}.
These findings reinforce the central theme of our study: humans' weak labels can provide a reliable and scalable supervisory channel for steering agentic AI systems.

\section{Related Work}
\paragraph{Scalable oversight}
In superhuman regimes where AI systems surpass human abilities, humans can no longer provide reliable supervision to evaluate and train them.
Recent scalable-oversight work explores strategies that let limited (human) supervisors steer stronger models.
Among the many scalable oversight protocols (such as weak-to-strong generalization \citep{burns2023weak},
easy-to-hard generalization \citep{sun2024easy},
debate \citep{khan2024debating}, and 
RLAIF \citep{bai2022constitutional}), our approach may seem similar to iterated amplification/IDA \citep{christiano2018iterated_amplification} and recursive task decomposition \citep{wu2021recursively} at a glance.
The difference is that our approach can be used for tasks that are not easily decomposable, as we are not decomposing the task, but rather relying on expertise-wise partitioned human supervision to enable the usage of complementary labels.

\paragraph{Weak supervision}
Weakly-supervised learning \citep{sugiyama2022machine} studies learning with weaker form of supervision instead of ground-truth labels.
Complementary labels \citep{ishida2017neurips, pmlr-v97-ishida19a, yu2018learning, wang2024icml, wang2025climage}, in particular, provide ``not-this-class'' information, with the traditional motivation of reducing annotation cost.
Our perspective differs: we leverage weak supervision not merely for cheaper labeling, but as a principled route to scalable oversight in the superhuman regime, where no single human expert can provide verification end-to-end.

\paragraph{Agentic systems}
LLMs have inspired a wide range of system architectures \citep{NEURIPS2022_9d560961, wang2023selfconsistency, madaan2023selfrefine, shinn2023reflexion}, in particular by incorporating multiple LLMs into collaborative frameworks \citep{du2023improving, liu2024a, chen2024agentverse, wu2024autogen, wang-etal-2024-unleashing}. 
Early studies relied on manually designed agentic workflows, which demanded substantial manual design and domain expertise.
More recent work has shifted toward \emph{automated agent discovery}, in which LLMs themselves are used to generate, refine, and optimize workflow structures \citep{hu2025automated, zhang2025multi, zhang2025darwin, zhang2025aflow}, but required ground-truth supervision to construct a fitness score.
We investigate whether automated agentic system design can proceed even with weaker training signals, i.e., complementary labels.

\section{Discussion \& Limitations}
\paragraph{Assumptions and extensions}
The assumption of uniformly querying an index plays a key role in obtaining unbiased complementary labels, and in our labeling protocol this assumption is guaranteed for the practical scenario we consider. However, to make the whole system more robust, we should also consider possible issues that may arise in practice, such as biased answers from experts, uncertain answers (which can be reframed as a type of noise), and overlaps between expert fields. For these cases, a wide range of complementary-label variants has been studied, such as biased complementary labels \citep{yu2018learning, wang2024icml}, noisy complementary labels \citep{ishiguro2022learning}, and multiple complementary labels \citep{feng2020learning}. We believe that incorporating these formulations would further enhance the robustness of our framework. 

Beyond the assumptions on the labels themselves, we also need to consider the structural assumption that the task is posed in a multiple-choice format. The multiple-choice format still has wide application, and in scenarios beyond the standard multiple choice setting, some open-ended questions can be transformed into structured multiple-choice forms to allow the use of our framework. For example, in debugging, modern systems strongly rely on cooperation among different sectors. The end-to-end service may look open-ended, but it can often be separated into different sectors or a hierarchical structure. In particular, when the root cause is unknown, we can ask engineers responsible for different parts of the product to provide weak signals, for example, confirming that the bug does not originate from their component. This kind of knowledge fragmentation provides opportunities to apply our framework.

\paragraph{Limitations}
Although our framework has shown its potential for addressing the expertise problem in evaluating foundation models, we would still like to discuss its current limitations. First, the framework assumes the availability of complementary labels. When even complementary labels are scarce or inapplicable, for example, in open-ended problems that are hard to decompose into a multiple-choice pipeline, complementary labels may be ill-defined, and the formulation may need to be revised. Second, when feedback from humans is no longer available and the boundary of the ground truth becomes ambiguous, neither complementary labels nor ordinary labels can be reliably defined. In such cases, new measures of accuracy would be required to revise the framework. We view these current limitations as interesting directions for future research.

\section{Conclusion}
We introduced \emph{scalable oversight via partitioned human supervision}, a protocol that utilizes the weak human supervision for tasks beyond any single expert's ability.
We derived an unbiased estimator of performance metric from complementary labels, analyzed its variance, and provided finite‑sample guarantees. We further proposed two practical mixture estimators that combine scarce ordinary labels with abundant complementary labels. 
Empirically, we show that these estimators allow us to estimate the performance of AI systems without requiring full ground truth.
They also serve as effective training signals which we demonstrate with automatically designing agentic workflows.

\section*{Acknowledgements}
RY was supported by the RIKEN-AIP Undergraduate Research Program, TI was supported by JST ASPIRE Grant Number JPMJAP25B1,
and MS was supported by JST ASPIRE Grant Number JPMJAP2405.

\section*{Ethics statement}
Our work is motivated by AI safety: we aim to improve oversight of advanced AI systems by enabling evaluation and training even on tasks beyond any single human expert’s competence.
This approach is intended as a step toward improving the reliability and alignment of future AI models with human interests, which we believe is a positive contribution to society.

Currently, we do not foresee immediate direct negative social impacts from the research itself.
However, we recognize that alignment and oversight techniques can be a double-edged sword: they are dual-use in the sense that they could be repurposed by malicious actors to optimize systems toward harmful objectives \citep{youtube2025podcast}, and even when used defensively, they can be vulnerable to strategic behavior that produces a false sense of safety \citep{hubinger2024sleeper, greenblatt2024alignment}.
These risks are not unique to our approach, but we caution against such misuse and vulnerabilities.

\section*{Reproducibility Statement}

Code and scripts to reproduce our experiments are available in our public GitHub repository: \href{https://github.com/R-Yin-217/Towards-Scalable-Oversight-via-Partitioned-Human-Supervision}{\faGithub\ Towards Scalable Oversight via Partitioned Human Supervision}.

For ADAS~\citep{hu2025automated} and AFlow~\citep{zhang2025aflow}, we follow their official implementations; see Appendix~\ref{app:auto-flow} for additional details. 
Because our experiments use the OpenAI API, exact replication may vary over time if models are updated or retired. 
We therefore provide full implementation details, dataset references, and evaluation scripts to support reproducibility as closely as possible.

\bibliography{iclr2026_conference}
\bibliographystyle{iclr2026_conference}

\newpage
\appendix
\section{Proof of Corollary \ref{thm:comp-estimator}}
\label{appendix:proof-comp-estimator}
\begin{proof}
For a single item, under Eq.~(\ref{eq:uniform}),
\begin{align*}
\mathbb{E}[W]
&= \Pr(\hat Y = Y)\Pr(W=1 \mid \hat Y = Y) 
    + \Pr(\hat Y \neq Y)\Pr(W=1 \mid \hat Y \neq Y) \\
&= \Pr(\hat Y=Y)\cdot 1 + \Pr(\hat Y\neq Y)\cdot\Big(1-\tfrac{1}{K-1}\Big) = \frac{A+K-2}{K-1}. 
\end{align*}
If we consider $\hat q=\tfrac{1}{n_{\mathrm{c}}}\sum_{i=1}^{n_{\mathrm{c}}} w_i$, an empirical version of $\mathbb{E}[W]$, this $\hat q$ is an unbiased estimator of $\mathbb{E}[W]$. Hence, $\widehat{A}_{\mathrm{comp}} = (K-1)\hat q - (K-2)$ is an unbiased estimator of $A$.
\end{proof}

\section{Connection between the complementary-label estimator and complementary loss}
\label{appendix:loss-connection}

\paragraph{Notation.}
Let $(X,Y)\sim\mathcal{D}$ with $Y\in[K]=\{1,\dots,K\}$. 
Given a model $g$, denote its prediction by $\hat Y=g(X)$. 
For any class $k\in[K]$, the loss is $\ell(k,\hat Y)$. 
In complementary-label learning~\citep{pmlr-v97-ishida19a}, one observes 
$\bar Y\neq Y$ sampled uniformly from $[K]\setminus\{Y\}$. 
The complementary loss is defined (Eq.~(10) in~\citep{pmlr-v97-ishida19a}) as
\[
\bar \ell(k,\hat Y) \;=\; -(K-1)\,\ell(k,\hat Y)\;+\;\sum_{j=1}^K \ell(j,\hat Y).
\]

\paragraph{Specialization to $0$-$1$ loss.}
For the $0$--$1$ loss $\ell(k,\hat Y)=\mathbb{I}\{\hat Y\neq k\}$, we compute
\begin{align}
\bar \ell(k,\hat Y) 
&= -(K-1)\,\mathbb{I}\{\hat Y\neq k\} + \sum_{j=1}^K \mathbb{I}\{\hat Y\neq j\} \notag \\
&= -(K-1)\,\mathbb{I}\{\hat Y\neq k\} + (K-1) \notag \\
&= (K-1)\,\mathbb{I}\{\hat Y=k\}. 
\end{align}

\paragraph{Risk under complementary labels.}
The standard risk is 
\[
R=\mathbb{E}_{(X,Y)}[\ell(Y,\hat Y)] = \Pr(\hat Y\neq Y) = 1-A,
\]
where $A=\Pr(\hat Y=Y)$ is the accuracy.  
By the risk-rewrite identity of~\cite{pmlr-v97-ishida19a} (Eq.~(8)), we also have

\[
R=\mathbb{E}_{(X,\bar Y)}[\bar\ell(\bar Y,\hat Y)]
=(K-1)\,\Pr(\hat Y=\bar Y).
\]

\paragraph{Equivalence to our estimator.}
Let $W=\mathbb{I}\{\hat Y\neq \bar Y\}$. Then {$\Pr(\hat Y=\bar Y)=1-\mathbb{E}[W]$, so
\[
1-A = (K-1)\,(1-\mathbb{E}[W]).
\]
Rearranging yields
\[
A=(K-1)\,\mathbb{E}[W]-(K-2).
\]
This is exactly the linear correction in Eq.~(\ref{eq:linear-transformation}).
Replacing $\mathbb{E}[W]$ by its empirical mean 
$\hat q=\tfrac{1}{n_{\mathrm{c}}}\sum_{i=1}^{n_{\mathrm{c}}} w_i$ 
gives the unbiased estimator 
$\widehat A_{\mathrm{comp}}=(K-1)\hat q-(K-2)$ 
of Theorem~\ref{thm:comp-estimator}.

\section{Data Labeling and Estimation Protocol}
\label{app:protocol}

In multiple-choice evaluation, the effective ``class'' is the \emph{index position} of an option rather than its textual content. To satisfy the uniformity condition in Eq.~(\ref{eq:uniform}), the labeling process proceeds as follows. For each item, all options are first shuffled uniformly at random. After shuffling, the correct index $y$ is recorded. A complementary label is then created by sampling a wrong index $\bar y$ uniformly from the $K-1$ remaining positions. The final published datum is $(x,\text{shuffled options},\bar y)$. When the ground truth is known, this procedure can be automated: $y$ is only used internally for generating $\bar y$, and the released dataset never exposes $y$. When labels are collected from human raters, the annotation interface can enforce the same randomization in the backend to guarantee uniformity. Any deviation, such as asking raters to pick the “most plausible wrong option,’’ would violate the assumption behind Eq \ref{eq:uniform}.

The same protocol can be extended to cases where the ground truth is not available in advance. A complementary label is first generated in the same way, and then the annotator is asked to verify whether it happens to be correct. This verification yields a mixture of ordinary and complementary labels, which forms the mixed dataset studied in \S~\ref{sec:combining-ordinary-and-complementary-labels}.

The end-to-end evaluation pipeline, including both ordinary and complementary sets and the associated estimators, is summarized in Algorithm~\ref{alg:collection}. Two estimation strategies are supported: a closed-form maximum-likelihood estimator and an inverse-variance weighted (IVW) estimator.

\begin{algorithm}[H]
\caption{Collecting complementary labels and estimating accuracy (fixed $K$)}
\label{alg:collection}
\begin{algorithmic}[1]
\State \textbf{Input:} MCQ items; evaluation model $f$; sizes $n_{\mathrm{o}},n_{\mathrm{c}}$.
\For{each item}
  \State Shuffle options uniformly; record correct index $y$.
  \If{item $\in$ ordinary set}
    \State Evaluate $f$ to obtain prediction $\hat y$; record $z=\mathbb{I}\{\hat y=y\}$.
  \Else \Comment{complementary set}
    \State Sample $\bar y$ uniformly from $\{1,\ldots,K\}\setminus\{y\}$; publish $(x,\text{options},\bar y)$.
    \State Evaluate $f$ to obtain prediction $\hat y$; record $W=\mathbb{I}\{\hat y\neq\bar y\}$.
  \EndIf
\EndFor
\State Compute $S_\mathrm{o}=\sum_j z_j$, $S_\mathrm{c}=\sum_i w_i$.
\State \textbf{Option A (ML):} return $\widehat{A}_{\mathrm{ML}}$ via Eq.~(\ref{eq:ML}) and a confidence interval from observed information.
\State \textbf{Option B (IVW):} compute $\widehat{A}_{\mathrm{ord}}$, $\widehat{A}_{\mathrm{comp}}$, and $\widehat{w}$, then return $\widehat{A}_{\mathrm{IVW}}$ with plug-in confidence interval.
\end{algorithmic}
\end{algorithm}

Two assumptions are critical for the validity of these estimators. First, the wrong index in each complementary label must be sampled uniformly; systematic biases, such as consistently preferring hard negatives, lead to biased estimates of $\widehat{A}_{\mathrm{comp}}$. Second, the ordinary and complementary sets must be drawn from the same distribution of items. If domain shift or subject imbalance exists, the combined estimator reflects accuracy on a mixture distribution. In such cases, stratified or subject-wise weighting may be preferable.

\section{One-step Newton Update and IVW Equivalence}
\label{app:one-step-ivw}

\paragraph{Setup and notation.}
Let $A \in (0,1)$ denote the (unknown) accuracy for a $K$-way multiple-choice task ($K \ge 3$).
We observe two independent samples:
(i) ordinary labels $S_\mathrm{o} \sim \mathrm{Bin}(n_{\mathrm{o}}, A)$ with empirical accuracy $\widehat A_{\mathrm{ord}}=S_\mathrm{o}/n_{\mathrm{o}}$;
(ii) complementary labels obtained by uniformly sampling a wrong option per item, so that
$S_\mathrm{c} \sim \mathrm{Bin}(n_{\mathrm{c}},q)$ with
\[
q(A)=\frac{A+K-2}{K-1},\qquad 
\widehat q = S_\mathrm{c}/n_{\mathrm{c}},\qquad 
\widehat A_{\mathrm{comp}} = (K-1)\widehat q-(K-2).
\]
We consider a pilot $A_0$ (e.g.\ $\widehat A_{\mathrm{ord}}$ or $\widehat A_{\mathrm{comp}}$) and write $q_0=q(A_0)$.

\paragraph{Newton update and plug-in simplification.}
The joint log-likelihood is 
\[
\ell(A) = S_\mathrm{o} \log A + (n_{\mathrm{o}}-S_\mathrm{o})\log(1-A) + S_\mathrm{c} \log q + (n_{\mathrm{c}}-S_\mathrm{c})\log(1-q).
\]
Differentiating gives the score functions
\[
U_\mathrm{o}(A) = \frac{n_{\mathrm{o}}(\widehat A_{\mathrm{ord}}-A)}{A(1-A)}, 
\qquad 
U_\mathrm{c}(A) = \frac{n_{\mathrm{c}}(\widehat A_{\mathrm{comp}}-A)}{(K-1)^2\,q(1-q)}.
\]
The observed information terms are
\[
\mathcal I_\mathrm{o}^{\mathrm{obs}}(A) = \frac{S_\mathrm{o}}{A^2} + \frac{n_{\mathrm{o}}-S_\mathrm{o}}{(1-A)^2}, 
\qquad 
\mathcal I_\mathrm{c}^{\mathrm{obs}}(A) = \frac{1}{(K-1)^2}\!\left(\frac{S_\mathrm{c}}{q^2} + \frac{n_{\mathrm{c}}-S_\mathrm{c}}{(1-q)^2}\right).
\]
A one-step Newton update around $A_0$ is
\[
A_1 = A_0 - \frac{U(A_0)}{U'(A_0)} 
= A_0 + \frac{U_\mathrm{o}(A_0)+U_\mathrm{c}(A_0)}{\mathcal I_\mathrm{o}^{\mathrm{obs}}(A_0)+\mathcal I_\mathrm{c}^{\mathrm{obs}}(A_0)}.
\]

When we set $A_0=\widehat A_{\mathrm{ord}}$, the ordinary part simplifies exactly:
since $S_\mathrm{o}=n_{\mathrm{o}} A_0$, we have $U_\mathrm{o}(A_0)=0$ and $\mathcal I_\mathrm{o}^{\mathrm{obs}}(A_0) = n_{\mathrm{o}}/[A_0(1-A_0)] = I_\mathrm{o}(A_0)$, the Fisher information.  
For the complementary part, we plug in $q=\widehat q$ so that $S_\mathrm{c}=n_{\mathrm{c}}\widehat q$ holds identically.  
This yields
\[
\mathcal I_\mathrm{c}^{\mathrm{obs}}(A_0)\big|_{q=\widehat q} 
= \frac{n_{\mathrm{c}}}{(K-1)^2\,\widehat q(1-\widehat q)} = I_\mathrm{c}(\widehat q),
\qquad 
U_\mathrm{c}(A_0)\big|_{q=\widehat q} = I_\mathrm{c}(\widehat q)\,(\widehat A_{\mathrm{comp}}-A_0).
\]
Substituting back into the Newton update gives
\[
A_1 \approx A_0 + \frac{I_\mathrm{c}(\widehat q)(\widehat A_{\mathrm{comp}}-A_0)}{I_\mathrm{o}(A_0)+I_\mathrm{c}(\widehat q)}.
\]

\paragraph{Closed-form IVW rule.}
Rearranging shows that the practical one-step update is a convex combination of the two empirical estimators:
\[
\widehat A_{\mathrm{IVW}}
\;\approx\;
\frac{I_\mathrm{o}(\widehat A_{\mathrm{ord}})\,\widehat A_{\mathrm{ord}}
      +I_\mathrm{c}(\widehat q)\,\widehat A_{\mathrm{comp}}}
     {I_\mathrm{o}(\widehat A_{\mathrm{ord}})+I_\mathrm{c}(\widehat q)}.
\]

\paragraph{Variance–information identities and final form.}
For the two constituent estimators, the variance–information identities are
\[
\Var(\widehat A_{\mathrm{ord}})=\frac{1}{I_\mathrm{o}},\qquad
\Var(\widehat A_{\mathrm{comp}})=\frac{1}{I_\mathrm{c}},
\]
with the plug-in Fisher information\[
I_\mathrm{o}(\widehat A_{\mathrm{ord}})=\frac{n_{\mathrm{o}}}{\widehat A_{\mathrm{ord}}(1-\widehat A_{\mathrm{ord}})},
\qquad
I_\mathrm{c}(\widehat q)=\frac{n_{\mathrm{c}}}{(K-1)^2\,\widehat q(1-\widehat q)}.
\]
Therefore the IVW weight can be written either in information form,
\(
w^\star=\frac{I_\mathrm{o}}{I_\mathrm{o}+I_\mathrm{c}},
\)
or, equivalently, in variance form (using $I_\bullet=1/\Var(\widehat A_\bullet)$),
\[
w^\star
=\frac{I_\mathrm{o}}{I_\mathrm{o}+I_\mathrm{c}}
=\frac{\Var(\widehat A_{\mathrm{comp}})}{\Var(\widehat A_{\mathrm{ord}})+\Var(\widehat A_{\mathrm{comp}})}.
\]
Plugging these into the convex combination yields exactly results in~\S\ref{sec:combining-ordinary-and-complementary-labels}:
\[
\widehat A_{\mathrm{IVW}}
=\;w^\star\,\widehat A_{\mathrm{ord}}+(1-w^\star)\,\widehat A_{\mathrm{comp}}.
\]

In summary, the IVW estimator is a one-step Newton approximation to the joint ML:
it is first-order equivalent and asymptotically efficient, but may differ in finite samples
because the weights are fixed by plug-in information rather than optimized jointly.
While the ML is theoretically optimal, the IVW rule is a simple linear combination that
often delivers practically comparable performance and is easier to apply in practice.

\section{Proof of Theorem \ref{thm:pac-comp}}
\label{app:pac-comp}

\begin{proof} \textbf{Hoeffding part.}
For i.i.d.\ $w_i\!\in\![0,1]$, Hoeffding gives
\[\Pr\!\big(|\hat q-\mathbb{E}\hat q|\ge \epsilon\big)\le 2\exp(-2n_{\mathrm{c}}\epsilon^2).
\]
Since $\widehat{A}_{\mathrm{comp}}=(K-1)\hat q-(K-2)$ and
$A=(K-1)\mathbb{E}\hat q-(K-2)$, we have
\[|\widehat{A}_{\mathrm{comp}}-A|=(K-1)\,|\hat q-\mathbb{E}[\hat q]|.
\]

\noindent\textbf{Empirical Bernstein part.}
Apply the two-sided empirical Bernstein inequality \citep{DBLP:conf/colt/MaurerP09}
to $\hat q=\tfrac{1}{n_{\mathrm{c}}}\sum_{i=1}^{n_{\mathrm{c}}}w_i$ with $w_i\in[0,1]$:
\[
\bigl|\hat q-\mathbb{E}\hat q\bigr|
\;\le\;
\sqrt{\frac{2\,\hat q(1-\hat q)\,\log(4/\delta)}{n_{\mathrm{c}}}}
\;+\;\frac{7\,\log(4/\delta)}{3\,(n_{\mathrm{c}}-1)}.
\]
Multiplying both sides by $(K-1)$ and using
$\widehat{A}_{\mathrm{comp}}-(A)= (K-1)\bigl(\hat q-\mathbb{E}[\hat q]\bigr)$
yields the second branch in Theorem~\ref{thm:pac-comp}
\end{proof}

\section{Proof of Theorem~\ref{thm:bernstein_mixture}}
\label{app:proof-bernstein}
\begin{proof}
Recall $z_j=\mathbb{I}\{\hat y_j=y_j\}$, $j=1,\dots,n_{\mathrm{o}}$, are i.i.d.\ Bernoulli($A$) with $\mathbb{E}[z_j]=A$; and
$w_i=\mathbb{I}\{\hat y_i\neq \bar y_i\}$, $i=1,\dots,n_{\mathrm{c}}$, are i.i.d.\ Bernoulli($q$) with $q=(A+K-2)/(K-1)$ under Eq.~(\ref{eq:uniform}). The two groups are independent.

Define the rescaled variables
\[
X^{(o)}_j:=\tfrac{w}{n_{\mathrm{o}}}z_j,\qquad
X^{(c)}_i:=\tfrac{(1-w)(K-1)}{n_{\mathrm{c}}}w_i,
\]
and collect them as $X_1,\ldots,X_n$ with $n=n_{\mathrm{o}}+n_{\mathrm{c}}$.
Then the centered sum
\begin{equation}
\label{eq:S_mixture}
S:=\sum_{j=1}^{n_{\mathrm{o}}}\!\bigl(X^{(o)}_j-\mathbb{E}X^{(o)}_j\bigr)
\;+\;
\sum_{i=1}^{n_{\mathrm{c}}}\!\bigl(X^{(c)}_i-\mathbb{E}X^{(c)}_i\bigr)
\;=\;\widehat{A}_{\mathrm{mix}}-A.
\end{equation}

We verify the conditions of Theorem~2.10 in \cite{boucheron2003concentration}:

\emph{(i) Independence)} holds by construction.

\emph{(ii) Boundedness).} For all $j,i$,
\[
\bigl|X^{(o)}_j-\mathbb{E}X^{(o)}_j\bigr|\le \tfrac{w}{n_{\mathrm{o}}},\qquad
\bigl|X^{(c)}_i-\mathbb{E}X^{(c)}_i\bigr|\le \tfrac{(1-w)(K-1)}{n_{\mathrm{c}}}.
\]
Set
\begin{equation}
\label{eq:c_bound}
c:=\max~\!\Bigl\{\tfrac{w}{n_{\mathrm{o}}},\ \tfrac{(1-w)(K-1)}{n_{\mathrm{c}}}\Bigr\}.
\end{equation}

\emph{(iii) Second moment).} Summing variances,
\begin{equation}
\label{eq:v_total}
v:=\sum_{j=1}^{n_{\mathrm{o}}}\!\mathrm{Var}(X^{(o)}_j)+\sum_{i=1}^{n_{\mathrm{c}}}\!\mathrm{Var}(X^{(c)}_i)
= w^2\tfrac{A(1-A)}{n_{\mathrm{o}}} + (1-w)^2\tfrac{(K-1)^2\,q(1-q)}{n_{\mathrm{c}}}.
\end{equation}

\emph{(iv) Higher moments).} Since $\lvert X_i-\mathbb{E}X_i\rvert\le c$, for all integers $q\ge 3$,
$(X_i-\mathbb{E}X_i)_+^q \le c^{\,q-2}(X_i-\mathbb{E}X_i)^2$; summing and taking expectations yields
$\sum_i \mathbb{E}[(X_i)_+^q]\le v\,c^{\,q-2}$, which is dominated by the requirement
$\tfrac{q!}{2}\,v\,c^{\,q-2}$ in \cite[Thm.~2.10]{boucheron2003concentration}.

Therefore, Theorem~2.10 applies to $S$ in Eq.~(\ref{eq:S_mixture}) and (using the symmetric lower tail plus a union bound) gives
$\Pr\{\,|S|\ge \sqrt{2vt}+ct\,\}\le 2e^{-t}$. Setting $t=\log(2/\delta)$ yields Eq.~(\ref{eq:bernstein_mixture}).
\end{proof}

\section{Practical plug-in forms of the Bernstein bound}
\label{app:berstein-plug-in}
The quantities $A$ and $q$ are unknown; replace them by their empirical counterparts. Define
$
\hat v_\mathrm{o}=\tfrac{\hat A_{\mathrm{ord}}(1-\hat A_{\mathrm{ord}})}{n_{\mathrm{o}}},\qquad
\hat v_\mathrm{c}=\tfrac{(K-1)^2\,\hat q(1-\hat q)}{n_{\mathrm{c}}},\qquad
\hat v_{\mathrm{mix}}=w^2\hat v_\mathrm{o}+(1-w)^2\hat v_\mathrm{c},
$
with $\hat q=S_\mathrm{c}/n_{\mathrm{c}}$. Substituting into Eq.~ (\ref{eq:bernstein_mixture}) yields the computable bound
$
\bigl|\hat A_{\mathrm{mix}}-A\bigr|
\;\le\;
\sqrt{\,2\,\hat v_{\mathrm{mix}}\,\log\tfrac{2}{\delta}\,}
\;+\;
c\,\log\tfrac{2}{\delta},
$
where $c$ is as in Eq.~(\ref{eq:c_bound}). For clarity, an equivalent compact statement is
\begin{equation}
\;|\hat A_{\mathrm{mix}}-A|
\;\le\;
\sqrt{\,2\log\tfrac{2}{\delta}\;\bigl[w^2\hat v_\mathrm{o}+(1-w)^2\hat v_\mathrm{c}\bigr]\,}
\;+\;
\log\tfrac{2}{\delta}\;\max~\!\Bigl(\tfrac{w}{n_{\mathrm{o}}},\ \tfrac{(1-w)(K-1)}{n_{\mathrm{c}}}\Bigr).
\label{eq:mix_plug_anyw}
\end{equation}

The variance component is minimized at
$
w^\star=\frac{v_\mathrm{c}}{v_\mathrm{o}+v_\mathrm{c}}$, a practical $w$ is defined as
$\hat w_{\mathrm{IVW}}=\frac{\hat v_\mathrm{c}}{\hat v_\mathrm{o}+\hat v_\mathrm{c}},
$
coinciding with the IVW weight in Eq.~(\ref{eq:wstar_practical}). At this choice, the dominant (variance) term equals
$
\sqrt{\,2\log\tfrac{2}{\delta}\cdot \frac{v_\mathrm{o} v_\mathrm{c}}{v_\mathrm{o}+v_\mathrm{c}}\,}.
$

Defining $\widehat{h}_{\mathrm{min}} := \tfrac{\hat v_\mathrm{o}\,\hat v_\mathrm{c}}{\hat v_\mathrm{o}+\hat v_\mathrm{c}}$, 
using the fully plug-in version gives
\begin{equation}
|\hat A_{\mathrm{mix}}-A|
\;\le\;
\sqrt{\,2\log\tfrac{2}{\delta}\;\widehat{h}_{\mathrm{min}}\,}
\;+\;
\log\tfrac{2}{\delta}\;\max~\!\Bigl(
\tfrac{\hat w_{\mathrm{IVW}}}{n_{\mathrm{o}}},\ 
\tfrac{(1-\hat w_{\mathrm{IVW}})(K-1)}{n_{\mathrm{c}}}
\Bigr),
\label{eq:mix_plug_ivw}
\end{equation}

Conceptually, the mgf-based Bernstein bound (Eq.~(\ref{eq:bernstein_mixture})) 
tightens Hoeffding’s inequality by adding both a variance term $v$ and a range-driven linear term $c$. 
As $n_{\mathrm{o}}$ or $n_{\mathrm{c}}$ grows, the right-hand side vanishes: the variance part decays as $O(1/n_{\mathrm{o}})+O(1/n_{\mathrm{c}})$, 
while the linear term scales as $O\!\big(\max~\{w/n_{\mathrm{o}},\,(1-w)(K-1)/n_{\mathrm{c}}\}\big)$. 
The plug-in IVW weight $\widehat w_{\mathrm{IVW}}=\tfrac{\hat v_\mathrm{c}}{\hat v_\mathrm{o}+\hat v_\mathrm{c}}$ 
converges to $w^\star=\tfrac{v_\mathrm{c}}{v_\mathrm{o}+v_\mathrm{c}}$; in particular, $w^\star\!\to 0$ when $n_{\mathrm{c}}\!\to\!\infty$ 
with $n_{\mathrm{o}}$ fixed (the complementary arm dominates), and $w^\star\!\to 1$ when $n_{\mathrm{o}}\!\to\!\infty$ 
with $n_{\mathrm{c}}$ fixed (the ordinary arm dominates). 

Formally, Eq.~(\ref{eq:bernstein_mixture}) is valid for any \emph{fixed} choice of the weight $w\in[0,1]$ 
(e.g., a constant choice or an estimate obtained from an independent pilot split). 
If instead $w$ is chosen adaptively from the \emph{same} evaluation data 
(e.g., the plug-in $\widehat w_{\mathrm{IVW}}$), the fixed-weight assumption is violated 
and the bound is no longer guaranteed. 
In such cases one can (i) revert to the union-bound inequalities of the main text, 
which hold for arbitrary (possibly data-dependent) $w$ via a triangle inequality; 
(ii) restore the fixed-$w$ condition through sample splitting or cross-fitting 
\citep{chernozhukov2018double,wager2018estimation}; 
or (iii) restrict $w$ to a finite grid $G\subset[0,1]$ and apply a union bound 
with $\delta$ adjusted by $|G|$ \citep{boucheron2013concentration,shalev2014understanding}.

\section{Benchmark Details}
\label{app:benchmark}
From \textbf{MMLU-Pro} \citep{NEURIPS2024_ad236edc}, we select only the 10-option questions, yielding 9{,}795 examples.  
\textbf{MedQA-USMLE} \citep{app11146421} and \textbf{GPQA} \citep{rein2024gpqa} are both four-option multiple-choice benchmarks; for GPQA we adopt the extended version for broader coverage.  
For \textbf{MATH-MC}, following the candidate-generation protocol of \citet{zhang2024multiplechoice}, we construct five-option questions from the original \textbf{MATH} dataset \citep{hendrycksmath2021}, discarding problems with fewer than four distractors, which yields 11{,}751 examples.  
All duplicates are removed to ensure consistency.

\section{Estimator Configurations}
\label{app:appendix-estimator}
For completeness, we summarize the exact configurations of all estimators used in 
Table~\ref{tab:estimator_results}.

\begin{itemize}
\item[Ord] Ordinary-label estimator based on $n_{\mathrm{o}}=300$ samples per dataset 
($n_{\mathrm{o}}=120$ for GPQA due to dataset size). Accuracy is computed directly from the proportion of correct ordinary labels.

\item[Comp-$n_{\mathrm{o}}$] Complementary-label estimator with the same number of samples as the ordinary case, i.e.\ $n_{\mathrm{c}}=n_{\mathrm{o}}$ ($n_{\mathrm{c}}=120$ for GPQA). This setting highlights the loss of efficiency when only complementary labels are available.

\item[Comp-(K-1)]
Complementary-label estimator with sample size fixed at $(K-1)$ times the ordinary case, i.e.\ $n_{\mathrm{c}}=(K-1)n_{\mathrm{o}}$.
Following the protocol in Appendix~\ref{app:protocol}, each instance with oracle label $y$ is converted into $(K-1)$ complementary “No” labels $\{\bar y\in\mathcal{Y}\setminus\{y\}\}$, and the ordinary label is discarded; thus no ordinary labels are present in this setting.

\item[Comp-Var] Complementary-label estimator with sample size $n_{\mathrm{c}}$ chosen such that its variance matches that of $n_{\mathrm{o}}$ ordinary labels, following Eq.~(\ref{eq:var-ratio}). This represents the idealized regime where complementary labels are sufficiently abundant to offset their lower per-sample information.

\item[IVW-0.5] A baseline inverse-variance weighted estimator with a fixed equal weight $w=0.5$ between ordinary and complementary accuracies. This serves as a naïve combination strategy without using variance information.

\item[IVW] Inverse-variance weighted estimator with closed-form plug-in weights (Eq.~(\ref{eq:wstar})), combining ordinary and complementary estimates optimally under the variance–information identities.

\item[ML] Maximum-likelihood estimator obtained by solving the joint likelihood equation using both ordinary and complementary samples. Asymptotically equivalent to IVW but involving a nonlinear update (Appendix~\ref{app:one-step-ivw}).

\item[Ord-Eval] Oracle reference: accuracy evaluated directly on the entire dataset using ordinary labels. This is shown only for comparison and is excluded from macro-averages.

\end{itemize}

\paragraph{Notes on datasets.}
$\dagger$ \textbf{Math} denotes \textbf{Math-MC} evaluated with \texttt{GPT-4.1-nano} instead of \texttt{GPT-5-nano}, due to reasoning-output constraints.  
$\ddagger$ \textbf{Math (CoT)} denotes the same \textbf{Math-MC} dataset evaluated under chain-of-thought prompting (applied only to Math; model=\texttt{GPT-4.1-nano}).

\section{Implementation details on ADAS and AFlow}
\label{app:auto-flow}
To ensure a fair comparison, we constructed validation and test splits for each benchmark, 
following the numbers and ratios used in \citet{hu2025automated}. 
Specifically, we sampled 120 validation and 800 test examples for Math-MC and Medical Abstract, 
and 88 validation and 458 test examples for GPQA.
For AFlow, we set the number of validation runs to 3 and used the same validation set as ADAS, rather than the separate validation subset reported in \citet{zhang2025aflow}.  

For code and prompts, we largely follow the official implementations of \citet{hu2025automated} and \citet{zhang2025aflow}, with two minor modifications for reproducibility in our setting.  

\paragraph{ADAS.}
The set of experts was adapted to better align with our benchmarks (e.g., replacing domain-specific experts in LLM Debate with ones relevant to GPQA and Medical Abstract). 
All other prompt components remain unchanged. 
We uploaded the full text of the adapted prompts and scripts in the supplementary material.  

\paragraph{AFlow.}
Since the original AFlow does not directly correspond to the multiple-choice format, 
we modified the answer extraction and normalization functions, and used the following prompt:  

\begin{lstlisting}[style=prompt]
Answer the following multiple choice question.
{problem['Question']}
(A) {problem['A']}
(B) {problem['B']}
(C) {problem['C']}
(D) {problem['D']}

At the very end, output the final answer in the format:

### X

Where X is exactly one of A, B, C, or D.

Do not add anything after this line. The last line of your output must 
only contain '### X'. If you write anything after the line '### X', your 
answer will be marked as incorrect.
\end{lstlisting}

\section{Estimator Variance Analysis}
\label{app:auto-result}
\begin{figure}[t]
  \centering
    \centering
    \includegraphics[width=\linewidth]{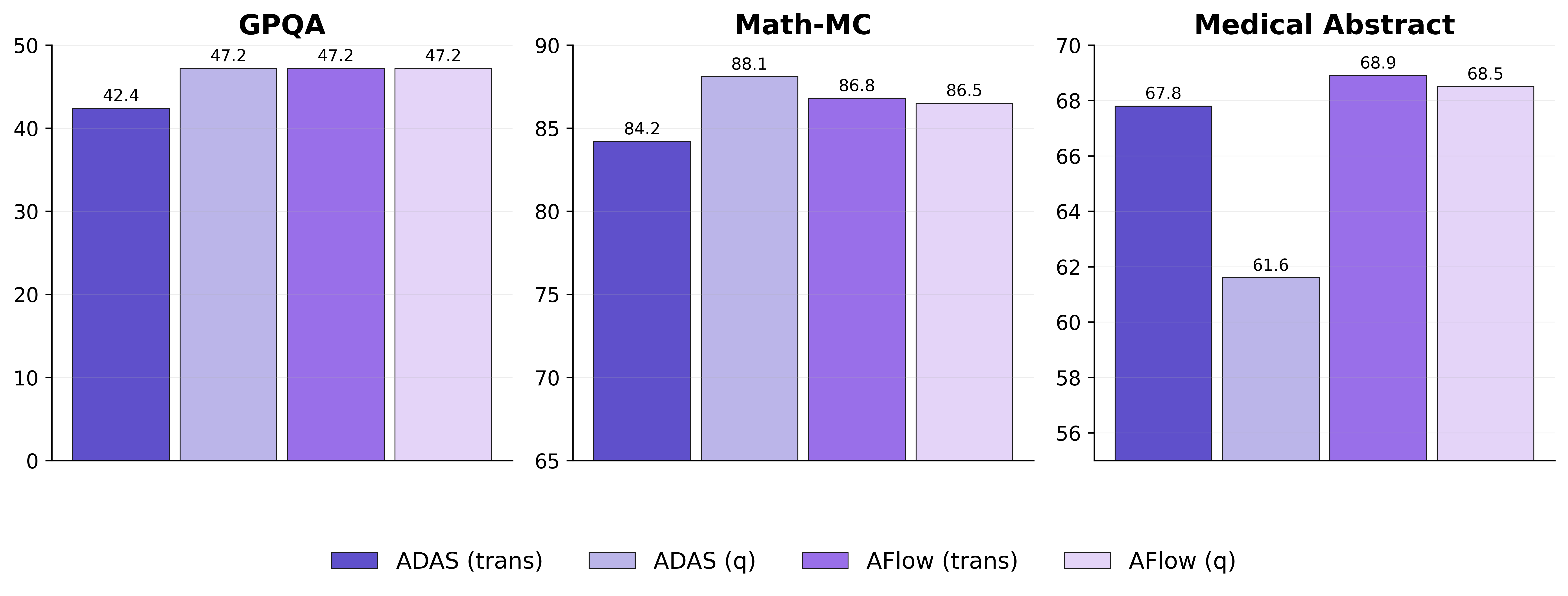}

    \caption{\textbf{Estimator sensitivity under weak signals.}
    Detailed comparison of raw complementary accuracy ($q$) and its linear transform (trans) across GPQA, Math-MC, and Medical Abstract. 
    The figure illustrates how the two estimators behave differently depending on task difficulty, with variance amplification explaining when the transform helps or hurts.}
    \label{fig:agent_detail}
\end{figure}

Figure~\ref{fig:agent_detail} reports ADAS and AFlow under the two estimators—raw complementary accuracy ($q$) and its linear transform (trans)—each selected at the iteration that maximizes validation accuracy.
The results show clear regime dependence: on \textbf{GPQA} (the hardest benchmark), $q$ is more reliable (e.g., ADAS drops from 47.2 with $q$ to 42.4 with trans, while AFlow remains unchanged at 47.2). 
On \textbf{Math-MC}, ADAS still prefers $q$ (88.1 vs.\ 84.2), whereas AFlow shows only a slight gain with trans (86.8 vs.\ 86.5). 
On \textbf{Medical Abstract}, where complementary accuracy is relatively high, the transform becomes beneficial: ADAS improves from 61.6 to 67.8, and AFlow from 68.5 to 68.9.  

This behavior is explained by variance amplification: according to Eq.~(\ref{eq:var-comp}), the variance of the transformed estimator scales by $(K\!-\!1)^2$ compared to $q$. 
When signals are weak (e.g., GPQA), variance amplification exacerbates noise and harms performance. 
In settings with stronger signals (Math-MC, Medical Abstract), the effect is less detrimental, and the transformed estimator occasionally yields modest gains. 
A systematic study of this trade-off is left for future work.

\section{Deviation from the Ord-Eval Reference}
\label{app:delta-table}

For completeness, Table~\ref{tab:estimator_deltas} reports the absolute deviation ($\Delta$) between each estimator and the Ord-Eval oracle on each benchmark. Column-wise minima are shown in \textbf{bold}. These values show that mixture estimators (IVW and ML) reduce the gap to the Ord-Eval reference compared with both ordinary-only and complementary-only estimators across most benchmarks, indicating that they provide more robust estimates. Ord and Comp-$n_{\mathrm{o}}$ can occasionally produce point estimates that are numerically closer to Ord-Eval (as in MedQA and GPQA). However, as discussed in Section~\ref{sec:exp1}, their standard errors are substantially larger, suggesting poorer reliability and limited reproducibility. This further supports the robustness of the mixture estimators.

\begin{table}[ht]
\centering
\caption{Absolute deviation ($\Delta$; in percentage points) from the Ord-Eval oracle. Column-wise minima are shown in \textbf{bold}.}
\label{tab:estimator_deltas}
\begin{tabular}{lcccccc}
\toprule
Estimator & MMLU-Pro & MedQA-USMLE & GPQA & MATH & MATH(CoT) & Avg $\Delta$ \\
\midrule
Ord        & 0.36 & 0.23 & 4.65 & 3.35 & 1.00 & 1.92 \\
Comp-$n_o$ & 0.97 & \textbf{0.01} & \textbf{0.35} & 4.23 & 3.45 & 1.80 \\
Comp-Var   & 2.30 & 2.05 & 4.15 & 3.11 & 2.54 & 2.83 \\
IVW-0.5    & 0.08 & 1.05 & 5.76 & 0.77 & 0.33 & 1.60 \\
IVW        & \textbf{0.00} & 0.80 & 5.62 & 0.66 & \textbf{0.03} & \textbf{1.42} \\
ML         & 0.03 & 1.01 & 5.59 & \textbf{0.54} & 0.24 & 1.48 \\
\bottomrule
\end{tabular}
\end{table}

\section{Case Study: Behavior of Agents Trained with Partial Feedback}
\label{app:agent-case}
In this section, we visualize and analyze the internal mechanisms of the agentic systems produced by ADAS and AFlow on the Medical Abstracts dataset. This case study illustrates how agentic workflows learned under partial feedback acquire coherent, structured decision processes. Our goal is to show that, even with weak supervision, these systems diverge meaningfully and improve from fixed workflows.

\begin{figure}[t]
    \centering

    \begin{minipage}{0.47\linewidth}
        \centering
        \includegraphics[width=\linewidth]{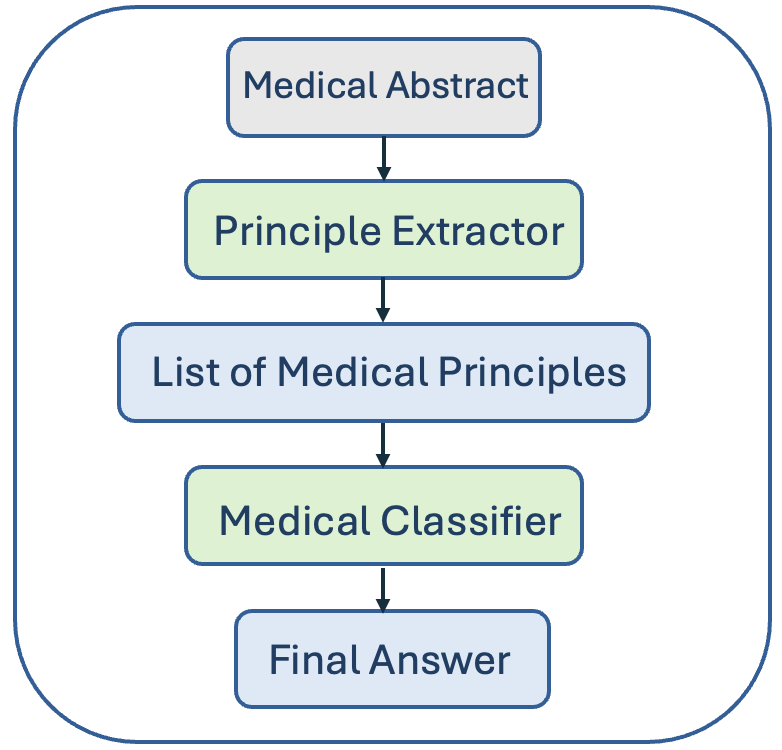}
        \caption*{(a) Principled Medical Abstract Classifier (ADAS)}
        \label{fig:adas_case}
    \end{minipage}
    \hfill
    
    \begin{minipage}{0.47\linewidth}
        \centering
        \includegraphics[width=\linewidth]{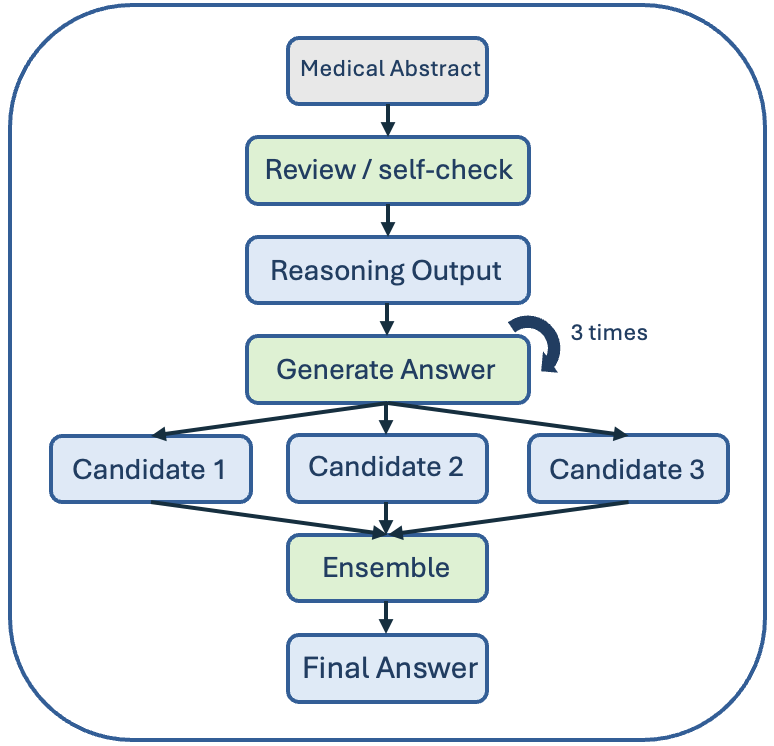}
        \caption*{(b) Review-Generate–Ensemble Workflow (Aflow)}
        \label{fig:aflow_case}
    \end{minipage}

    \caption{
        Workflows of the best ADAS and AFlow agents learned under partial-feedback training.
        Complementary label rewards lead ADAS to adopt a principle-extraction–guided classifier,
        while AFlow learns a review-generate–ensemble pipeline. 
    }
    \label{fig:best_agents}
\end{figure}

Figure~\ref{fig:best_agents} presents the best-performing agentic workflows discovered by ADAS and AFlow. The ADAS system, which we refer to as the Principled Medical Abstract Classifier, and the AFlow workflow, which name was chosen for clarity, both adopt multi-stage decision pipelines rather than directly producing a single-shot answer, as done by baseline methods.

In Figure~\ref{fig:best_agents}(a), ADAS performs a consistent two-stage decomposition. Instead of mapping an abstract directly to a disease label, the agent first produces a numbered list of key medical principles or mechanistic factors extracted from the text. These principles are then passed to a separate classifier, which uses them as evidence for the final prediction. Figure~\ref{fig:best_agents}(b) shows that AFlow adopts a multi-candidate reasoning pipeline. The agent first generates a structured review of the abstract using a task-specific prompt. Conditioned on this shared reasoning trace, it then produces several candidate answers. A final ensemble operator aggregates these candidates and selects the most consistent prediction.

Such modular, review-based, and multi-candidate workflows do not appear in established baselines, which rely on fixed or single-pass reasoning. The emergence of these structures demonstrates that complementary label supervision can shape the agent’s decision process, not merely its accuracy. Partial feedback encourages the agent to construct interpretable intermediate representations and use them as evidence, even without full supervision. This confirms that complementary label training provides a behavior shaping signal that yields more structured and interpretable workflows.

\section{Large Language Models Usage Statement}
LLMs assisted us in formulating our ideas, working on extensions and analysis, and polishing the writing.
All of the LLM outputs were subsequently verified, revised and modified by the authors.
Any errors or inappropriate statements remain the responsibility of the authors.  

\end{document}